\documentclass{article}

% if you need to pass options to natbib, use, e.g.:
%     \PassOptionsToPackage{numbers, compress}{natbib}
% before loading neurips_2021

% ready for submission
% \usepackage{neurips_2021}

% to compile a preprint version, e.g., for submission to arXiv, add add the
% [preprint] option:
\usepackage[preprint]{neurips_2021}

% to compile a camera-ready version, add the [final] option, e.g.:
%     \usepackage[final]{neurips_2021}

% to avoid loading the natbib package, add option nonatbib:
%    \usepackage[nonatbib]{neurips_2021}

\usepackage[utf8]{inputenc} % allow utf-8 input
\usepackage[T1]{fontenc}    % use 8-bit T1 fonts
\usepackage{hyperref}       % hyperlinks
\usepackage{url}            % simple URL typesetting
\usepackage{booktabs}       % professional-quality tables
\usepackage{amsfonts}       % blackboard math symbols
\usepackage{nicefrac}       % compact symbols for 1/2, etc.
\usepackage{microtype}      % microtypography
\usepackage{physics}
\usepackage{amsmath}
\usepackage{amssymb}
\usepackage{mathtools}
\usepackage[mathscr]{eucal}
\usepackage{color}
\usepackage{bbm}
\usepackage{amsthm}
\usepackage{mathrsfs}
\usepackage{subfigure}

%  Macros

\def\w{\mathbf{w}}
\def\thet{\mathbf{\theta}}

\def\E{\mathbb{E}}
\def\R{\mathbb{R}}
\def\V{\mathbb{V}}
\def\X{\mathbf{X}}
\def\A{\mathbf{A}}
\def\D{\mathbf{D}}
\def\d{\mathbf{d}}
\def\b{\mathbf{b}}
\def\k{\mathbf{k}}
\def\P{\mathbf{P}}
\def\I{\mathbf{I}}

\def\R{\mathbb{R}}
\def\X{\mathbf{X}}
\def\Y{\mathbf{Y}}
\def\L{\mathbf{L}}
\def\v{\mathbf{v}}
\def\r{\mathbf{r}}

\def\bPhi{\mathbf{\Phi}}

\def\bGamma{\mathbf{\Gamma}}
\def\bPhi{\mathbf{\Phi}}

\def\calS{\mathcal{S}}
\def\calA{\mathcal{A}}
\def\calP{\mathcal{P}}
\def\y{\mathbf{y}}

\newcommand{\model}{X-ETD($n$)}
\newcommand{\notlearned}{Monte Carlo}

\mathtoolsset{showonlyrefs}

\newcounter{ass_counter}
\newtheorem{assumption}[ass_counter]{Assumption}

\newcounter{prop_counter}
\newtheorem{proposition}[prop_counter]{Proposition}

\newcounter{lem_counter}
\newtheorem{lemma}[lem_counter]{Lemma}

\newcounter{thm_counter}
\newtheorem{theorem}[thm_counter]{Theorem}

\usepackage[dvipsnames]{xcolor}
\usepackage{tikz}
\usetikzlibrary{arrows,shapes,fit,shadows,decorations.pathmorphing}
\usepackage{subfigure,wrapfig}

\tikzstyle{circ}=[circle,draw=black,thick,minimum size=8mm,>=stealth] 
\tikzstyle{state}=[circle,draw=black,minimum size=2em,>=stealth] 
\tikzstyle{ellip}=[ellipse,draw=black,thick,minimum size=4mm, text width=.4cm]
\tikzstyle{system} = [draw, dotted, minimum height=2em]
\tikzstyle{input} = [text centered, minimum height=2em]
\tikzstyle{remark} = [text centered, minimum height=2em, YellowOrange]
\tikzstyle{small_txt} = [text centered, minimum height=.4em]
\tikzstyle{null} = [inner sep=0, outer sep=0]
\tikzstyle{op} = [draw, fill=lightgray!10, text centered,
                  minimum height=2em, rounded corners]
\tikzstyle{mlp} = [draw, text width=10em, fill=lightgray!10, text centered,
                  minimum height=2em, rounded corners]
\tikzstyle{top} = [draw, fill=RoyalBlue!50, text centered,
                  minimum height=2em, rounded corners]
\tikzstyle{inp} = [text centered, minimum height=1em]
\tikzstyle{tinp} = [text centered, minimum height=1em, RoyalBlue]
\tikzstyle{dgraph}=[->, line width=1pt]

\title{Learning Expected Emphatic Traces for Deep RL}

% The \author macro works with any number of authors. There are two commands
% used to separate the names and addresses of multiple authors: \And and \AND.
%
% Using \And between authors leaves it to LaTeX to determine where to break the
% lines. Using \AND forces a line break at that point. So, if LaTeX puts 3 of 4
% authors names on the first line, and the last on the second line, try using
% \AND instead of \And before the third author name.

\author{Ray Jiang \\ DeepMind \\ London, UK \\ \texttt{rayjiang@google.com} \And 
        Shangtong Zhang \\ University of Oxford \\ Oxford, UK \\ \texttt{shangtong.zhang@cs.ox.ac.uk} \And 
        Veronica Chelu \\ McGill University \\ Montreal, QC, Canada \\ \texttt{veronica.chelu@mail.mcgill.ca} \And  
        Adam White \\ DeepMind \\ Edmonton, Canada \\ \texttt{adamwhite@google.com} \And
        Hado van Hasselt \\ DeepMind \\ London, UK \\ \texttt{hado@google.com}}
\begin{document}

\maketitle

\begin{abstract}

Off-policy sampling and experience replay are key for improving sample efficiency and scaling model-free temporal difference learning methods. When combined with function approximation, such as neural networks, this combination is known as the deadly triad and is potentially unstable. Recently, it has been shown that stability and good performance at scale can be achieved by combining emphatic weightings and multi-step updates. This approach, however, is generally limited to sampling complete trajectories in order, to compute the required emphatic weighting. In this paper we investigate how to combine emphatic weightings with non-sequential, off-line data sampled from a replay buffer. We develop a multi-step emphatic weighting that can be combined with replay, and a time-reversed $n$-step TD learning algorithm to learn the required emphatic weighting. We show that these state weightings reduce variance compared with prior approaches, while providing convergence guarantees. We tested the approach at scale on Atari 2600 video games, and observed that the new X-ETD($n$) agent improved over baseline agents, highlighting both the scalability and broad applicability of our approach.
\end{abstract}

Many deep reinforcement learning systems are not sample efficient. A simple and effective way to improve sample efficiency is to make better use of prior experience via replay \citep{lin1992self, mnih2015, SchaulQAS15, rainbow}. Previous work demonstrated, somewhat surprisingly, that increasing the amount of replay in a model-free learning system can surpass the sample efficiency and final performance of model-based agents which utilize significantly more computation \citep{Hasselt2019WhenTU}. 

While improving on sample efficiency, using experience replay also introduces more potential for instability. Most approaches update from mini-batches of previous experience corresponding to older policies, and are therefore off-policy \citep{mnih2015, rainbow}. Unfortunately combining bootstrapping via temporal-difference updates, function approximation and off-policy learning---known as the \emph{deadly triad} \citep{sutton2018}---can destabilize learning resulting in ``soft divergence'', slower learning, and reduced sample efficiency even if the parameters do not fully diverge \citep{hasselt2018}. Additionally, learning methods based on off-policy \emph{importance sampling} (IS) corrections can result in high variance and poor performance during learning. This can be improved in practice by bootstrapping more, for instance by cleverly clipping the IS ratios as in the V-trace algorithm \citep{espeholt2018} or ABTD \citep{Mahmood:2017AB}, though bootstrapping too much can exacerbate issues related to the deadly triad.

In order to prevent divergence, we can try to correct the mismatch between the state distribution in the replay buffer and the current policy. The emphatic TD($\lambda$) algorithm  \citep[ETD($\lambda$);][]{sutton2016emphatic} reweights the TD($\lambda$) updates with an ``emphatic'' state weighting based on a ``followon'' trace that, intuitively, keeps track of how important each state is in the learning process.  For instance, states that are heavily used to update other state values, via bootstrapping, will receive more emphasis, which ensures their values are sufficiently accurate even if they are updated infrequently.  This prevents divergent learning dynamics.

ETD($\lambda$) uses eligibility traces and has not yet been combined with neural network function approximation or replay. However, the idea of emphatic weighting is not restricted to trace-based (``backward-view'') algorithms and can be extended to other settings.  For instance, \emph{$n$-step Emphatic TD} (NETD) \citep{jiang2021} is a recent algorithm that combines emphatic weighting with $n$-step forward-view updates as well as V-trace learning targets. For consistency with the canonical name TD($n$), for $n$-step TD learning, we call this algorithm ETD($n$) in this paper. This was shown to outperform V-trace at scale in Atari and diagnostic MDP experiments \citep{jiang2021}.

The emphatic weightings used in ETD($n$) are sequentially accumulated over time, in the form of trajectory-dependent traces, and can thus only be computed from online sequential trajectories, or full episodes of offline trajectories. 
In this paper we investigate how to combine emphatic weightings with non-sequential, off-line data sampled from a replay buffer. The idea is to estimate \emph{expected} emphatic weightings as a function of state \citep{zhang2020provably,van2020expected}, allowing us to appropriately weight the learning updates even if the inputs are sampled out of order. This reduces well-known variance issues with emphatic weightings \citep{ghiassian2018online,imani2018,zhang2020provably}. We show in Sec.~\ref{sec:x_netd_vtrace_theory} that well-estimated emphatic weights reduce the potentially high variance of ETD($n$) and achieve convergence with an upper bound on the bias from the ground-truth value function.

Our contributions include 
1) an off-policy time-reversed TD learning algorithm to learn the expected $n$-step emphatic trace using non-sequential data;
2) a discussion of potential stabilization techniques;
3) an analysis of theoretical properties of variance, stability and convergence for the resulting algorithm {\model};
4) an investigation of practical benefits of the approach when used at scale: we observed improved performance on Atari when using {\model} with replay.

\section{Background}
We denote random variables with uppercase (e.g., $S$) and the obtained values with lowercase letters (e.g., $S = s$). Multi-dimensional functions or vectors are bolded (e.g., $\b$), as are matrices (e.g. $\A$). For all state-dependent functions, we also allow time-dependent shorthands (e.g., $\gamma_t\!=\! \gamma(S_t)$).

\subsection{Reinforcement Learning problem setup}
We consider the usual RL setting in which an agent interacts with an environment, modelled as an infinite horizon \emph{Markov Decision Process} (MDP) $(\calS, \calA, P, r)$, with a finite state space $\calS$, a finite action space $\calA$, a state-transition distribution $P\!:\! \calS \!\times \!\calA \to \calP(\calS)$ (with $\calP(\calS)$ the set of probability distributions on $\calS$ and $P(s^\prime|s, a)$ the probability of transitioning to state $s^\prime$ from $s$ by choosing action $a$), and a reward function $r : \calS \times \calA \rightarrow \R$. A policy $\pi : \calS \rightarrow \calP(\calA)$ maps states to distributions over actions; $\pi(a|s)$ denotes the probability of choosing action $a$ in state $s$ and $\pi(s)$ denotes the probability distribution of actions in state $s$. Let $S_t, A_t, R_t$ denote the random variables of state, action and reward at time $t$, respectively.

The goal of \emph{policy evaluation} is to estimate the \emph{value function} $v_\pi$, defined as the expectation of the discounted return under policy $\pi$:
\begin{align}
G_t & \doteq R_{t+1} + \Sigma_{i=t+1}^\infty \gamma_i R_{i+1} = R_{t+1} + \gamma_{t+1} G_{t+1} \,,\\
v_{\pi}(s) & \doteq \mathbb{E} [ G_t \mid S_t=s, A_{k} \sim \pi(S_{k}), S_{k+1} \sim P(S_{k}, A_{k}) \text{ for all } k\geq t] \,,\label{eq:learning_target}
\end{align}
where $\gamma : \calS \to [0, 1]$ is a discount factor. 
We consider function approximation and use $v_{\w}$ as our estimate of $v_\pi$, where $\w$ are parameters of $v_{\w}$ to be updated.

In the case of \emph{off-policy} policy evaluation,
though our goal is to estimate $v_\pi$,
the actions for interacting with the MDP are sampled according to a different policy $\mu$.
We refer to $\pi$ and $\mu$ as target and behavior policies respectively and make the following assumption for the behavior policy $\mu$:
\begin{assumption}
\label{assu ergodic}
(Ergodicity) The Markov chain induced by $\mu$ is ergodic.
\end{assumption}
\begin{assumption}
\label{assu coverage}
(Coverage) $\pi(a|s) > 0\Longrightarrow \mu(a | s) > 0$ holds for any $(s, a)$.
\end{assumption}
Under Assumption~\ref{assu ergodic},
we use $d_\mu$ to denote the ergodic distribution of the chain induced by $\mu$.
In this paper, 
we consider two off-policy learning settings: the \emph{sequential setting} and the \emph{i.i.d. setting}.
In the sequential setting,
the algorithm is presented with an infinite sequence as induced by the interaction $$(S_0, A_0, R_1, S_1, A_1, R_2, \dots),$$
where $A_t \sim \mu(S_t), R_{t+1} \doteq r(S_t, A_t), S_{t+1} \sim P(S_t, A_t)$.  The idea is then that we update the value and/or policy at each of these states $S_t$, using data following the state (e.g., the sampled return).  Updates at state $S_t$ always happen before updates at states $S_{t+k}$, for $k > 0$.

In the i.i.d. setting, the algorithm is presented with an infinite number of \emph{finite} sequences of length $n$
\[
\{(S^k_0, A^k_0, R^k_1, S^k_1, A^k_1, R^k_2, \dots, S^k_n)\}_{k=1, 2, \ldots} \,,
\]
where the starting state of a sequence is sampled i.i.d., such that $S^k_0 \sim d_\mu$, and then the generating process for the subsequent steps is the same as before: $A^k_t \sim \mu(S^k_t), R^k_{t+1} \doteq r(S^k_t, A^k_t), S^k_{t+1} \sim P(S^k_t, A^k_t)$.
The idea is then that we update the value and/or policy of the first state in each sequence, $S^k_0$, using the rest of that sequence, e.g., by constructing a bootstrapped $n$-step return.

The sequential setting corresponds to the canonical agent-environment interaction \citep{sutton2018}.
Sequential algorithms are often data inefficient, since typically each state $S_t$ is updated only once and then discarded \citep[e.g.,][]{Watkins2004Qlearning}.
One way to improve data efficiency is to store the sequential data in a replay buffer \citep{lin1992self} and reuse these for further updates.
If $\mu$ is stationary and these tuples are uniformly sampled from a large-enough buffer, their distribution is almost the same as $d_\mu$.
%they are almost i.i.d..
% \footnote{Assume the steady state distribution under behavior policy exists, we denote it by $d_\mu$}
Hence uniform replay is akin to the i.i.d. setting.
If we sample from the replay buffer with different priorities, e.g., $S^k_0$ is sampled from some other distribution $d_p$,  the updates to $S^k_0$ can be reweighted with importance-sampling ratios $d_\mu(S^k_0)/d_p(S^k_0)$ to retain the similarity to the i.i.d. setting. Therefore, for simplicity and clarity, we present our theoretical results in the i.i.d. setting.\footnote{In practice, computing $d_\mu(S^k_0)/d_p(S^k_0)$ exactly is usually impossible.
One can, however,
approximate it with $1/(d_p(S^k_0)N)$ with $N$ being the size of the replay buffer.
We refer the reader to \citep{SchaulQAS15} for more details about this approximation.}

\subsection{Policy Evaluation}

We use the sequential setting and linear function approximation to demonstrate three RL algorithms for off-policy policy evaluation. We denote the features of state $S_t$ by $\boldsymbol{\phi}(S_t)$ or $\boldsymbol{\phi}_t$.

\paragraph{Off-policy TD($n$)} 
Off-policy TD($n$) updates $\w$ iteratively as
\begin{align}
\label{eq:tdn}
     \textstyle{\w_{t+1} = \w_t + \alpha \sum_{k=t}^{t+n-1} \left(\prod_{i=t}^{k-1} \rho_i \gamma_{i+1} \right) \, \rho_k \delta_k(\w_t) \boldsymbol{\phi}_t},
\end{align}
where $\rho_t \doteq \frac{\pi(A_t | S_t)}{\mu(A_t | S_t)}$ is an importance sampling (IS) ratio and $\delta_k(\w_t)$ is the TD error:
\begin{align}
\delta_k(\w_t) = R_{k+1} + \gamma_{k+1}v_{\w_t}(S_{k+1})-v_{\w_t}(S_k) \,.
\end{align}

\paragraph{ETD($n$)} Off-policy TD($n$) can possibly diverge with function approximation.
Emphatic weightings are an approach to address this issue \citep{sutton2016emphatic}. In particular, ETD($n$) considers the following ``followon trace'' to stabilize the off-policy TD($n$) updates \citep{jiang2021}:
\begin{align}
\textstyle{F_t = \left(\prod_{i=t-n}^{t-1} \rho_{i} \gamma_{i+1}\right) F_{t-n} + 1, \text{ with }  F_0 = F_1 = \dots = F_{n-1} = 1}, \label{eq:followon_trace}
\end{align}
thus updating $\w_t$ iteratively as
\begin{align}
\label{eq:etdn}
     \textstyle{\w_{t+1} = \w_t + \alpha F_t \sum_{k=t}^{t+n-1} \left(\prod_{i=t}^{k-1} \rho_i \gamma_{i+1} \right) \, \rho_k \delta_k(\w_t) \boldsymbol{\phi}_t}.
\end{align}
\citet{jiang2021} proved that this ETD($n$) update is stable.
In this paper, we consider \emph{stability} in the sense of \citet{sutton2016emphatic},
i.e.,
a stochastic algorithm computing $\qty{\w_t}$ according to 
$\w_{t+1} \doteq \w_t + \alpha_t (\b_t - \A_t \w_t)$ is said to be \emph{stable} if $\A \doteq \lim_{t \to \infty} \E[\A_t]$ is positive definite (p.d.)\footnote{See~\eqref{eq xetdn} for specific forms of vector $b_t$, matrix $A_t$ in our case.}.

Notice $F_t$ in \eqref{eq:followon_trace} is a trace, defined on a sequence of transitions ranging, via recursion on previous values $F_{t-n}$, into the indefinite past. When we do not have access to this sequence to compute the right weighting for a given state, for instance because we sampled this state uniformly from a replay buffer, we need to consider an alternative way to correctly weight the update. \textbf{This incompatibility between the i.i.d. setting and ETD($n$) is the main problem we address in the paper}. To differentiate from our proposed emphatic weighting, from here on we refer to the ETD($n$) trace as the \emph{Monte Carlo} trace since it is a Monte Carlo return in reversed time with ``reward'' signal of $1$ every $n$ steps. In addition, we use the term \emph{emphasis} as a shorthand for ``emphatic trace''.

\section{Proposal: Learn the Expected Emphasis}
\label{sec:learn_emphasis}

In order to apply emphatic traces to non-sequential i.i.d. data, 
we propose, akin to \citet{zhang2020provably}, to directly learn a prediction model that estimates the limiting expected emphatic trace,
i.e., 
we train a function $f_\theta$ parameterized by $\theta$ such that $f_\theta(s)$ approximates $\lim_{t \to \infty} \E_\mu[F_k \mid S_k = s]$.\footnote{For the ease of presentation, we assume the existence of such a limit following \citet{sutton2016emphatic, jiang2021}. The existence can be proved in a similar way to Lemma 1 of \citet{zhang2019}, which we leave for future work.} We use the learned emphasis $f_{\theta_k}$ in place of the {\notlearned} ETD($n$) trace $F_k$, to re-weight the $n$-step TD update in \eqref{eq:etdn}. We refer to the resulting value learning algorithm \emph{eXpected Emphatic TD($n$)}, \emph{\model}.
Thanks to the trace prediction model $f_\theta$,
a sequential trajectory is no longer necessary for computing the emphatic weighting X-ETD($n$).
We know that using a learned expected emphasis can introduce approximation errors.
Thus Section~\ref{sec:x_netd_vtrace_theory} contains a theoretical analysis,
which shows that as long as the function approximation error is not too large, 
stability, convergence, and a reduction in variance, are all guaranteed.
We dedicate this section to the describing how to learn $f_\theta$.

The $n$-step emphatic traces in \eqref{eq:followon_trace} are designed to emphasize  $n$-step TD updates. Consequentially, the trace recursion in \eqref{eq:followon_trace} follows the same blueprint as TD($n$), but in the reverse direction of time. Hence a natural choice of learning algorithm for the expected emphasis is \emph{time-reversed TD learning} that learns the ``reward'' $1$ every $n^{\text{th}}$ step using off-policy TD($n$) with the time index reversed.
Considering the i.i.d. setting,
we update $\theta_k$ using a ``semi-gradient'' \citep{sutton2018} TD update:
\begin{align}
\label{eq:backward_td}
\textstyle{\thet_{k+1} = \thet_k + \alpha_k^{\thet}\left[\left(\prod_{t=1}^{n} \gamma^k_t \rho^k_{t-1} \right) f_{\thet_k}(S^k_0) + 1 - f_{\thet_k}(S^k_n)\right] \nabla_{\thet_k} f_{\thet_k} (S^k_n)},
\end{align}
where $\alpha_k^{\thet}$ is a possibly time-dependent step size,
$\rho^k_t \doteq \frac{\pi(A^k_t | S^k_t)}{\mu(A^k_i | S^k_t)}$,
and $\gamma^k_t \doteq \gamma(S^k_t)$
. 
This update corresponds to the semi-gradient of the following loss,
\begin{align}
\label{eq:learning_f}
\textstyle{\mathcal{L}^F_k \doteq 
\left[\left(\prod_{t=1}^{n} \gamma^k_t\rho^k_{t-1}\right) f_{\thet_k}(S^k_0) + 1 - f_{\theta_k}(S^k_n)\right]^2}.
\end{align}

For the case of linear value function approximation, the update in \eqref{eq:backward_td} may not be stable because its update matrix, $\bPhi^T (\I- (\bGamma \P^T_{\pi})^{n}) \D_{\mu} \bPhi$, is not guaranteed to be positive definite (details in the appendix).
Here $\P_{\pi}$ is the transition matrix such that $\P_\pi(s, s^\prime) \doteq \sum_a \pi(a|s)P(s^{\prime}|s, a)$,  $\bGamma$ is a diagonal matrix such that $\bGamma(s, s) \doteq \gamma(s)$, $\D_\mu$ is a diagonal matrix whose diagonal entry is $d_\mu$,
and $\bPhi$ is the feature matrix whose $s$-th row is $\boldsymbol{\phi}(s)^\top$. 
Thus we propose two stabilization techniques for the time-reversed TD learning updates.

\paragraph{IS Clipping} 
One straightforward way is to clip the IS ratios in \eqref{eq:backward_td} just like in V-trace \citep{espeholt2018},
i.e., we update $\theta$ iteratively as

\begin{align}
\label{eq backward td clip}
\textstyle{\thet_{k+1} = \thet_k + \alpha_k^{\thet}\left[\left(\prod_{t=1}^{n} \gamma^k_t \min(\rho^k_{t-1}, \bar \rho)\right) f_{\thet_k}(S^k_0) + 1 - f_{\thet_k}(S^k_n)\right] \nabla_{\thet_k} f_{\thet_k} (S^k_n)},
\end{align}
for some $\bar{\rho} > 0$; typically $\bar{\rho} = 1$.

Define the substochastic matrix $\P_{\bar \rho}$ such that for any state $s, s^\prime$,
\begin{align}
\textstyle{\P_{\bar \rho}(s, s^\prime) \doteq \sum_a \mu(a | s) \min(\rho(a | s), \bar \rho) p(s^\prime|s, a) \gamma(s^\prime)}.
\end{align}
Then the update matrix of~\eqref{eq backward td clip} is 
$\bPhi^T (\I- (\P_{\bar \rho}^T)^{n}) \D_{\mu} \bPhi$ (see details in the appendix).

We prove that when estimating the expected emphasis using linear function approximation, there exist conditions under which we can guarantee stability at the cost of incurring additional bias.
\begin{proposition}
\label{prop trace clip}
There exists a constant $\tau > 0$ such that the update in \eqref{eq backward td clip} is stable whenever $\bar \rho < \tau$.
\end{proposition}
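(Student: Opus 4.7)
The plan is to view the expected update matrix $\A(\bar\rho) \doteq \bPhi^T(\I - (\P_{\bar \rho}^T)^n)\D_{\mu}\bPhi$ as a continuous function of the clipping threshold $\bar\rho \in [0, \infty)$ and apply a perturbation argument around $\bar\rho = 0$. Recall that stability, in the sense of \citet{sutton2016emphatic}, requires $\A(\bar\rho)$ to be positive definite, i.e., $x^\top \A(\bar\rho) x > 0$ for every nonzero $x$, which is equivalent to its symmetric part $\tfrac{1}{2}(\A(\bar\rho) + \A(\bar\rho)^\top)$ being positive definite in the usual symmetric sense.

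First, I would evaluate at $\bar\rho = 0$. Since importance ratios are nonnegative, $\min(\rho(a|s), 0) = 0$ for every $(s,a)$, so $\P_{\bar\rho}$ is the zero matrix and $(\P_{\bar\rho}^T)^n = 0$. Hence $\A(0) = \bPhi^T \D_\mu \bPhi$, which is symmetric and positive definite provided $\bPhi$ has linearly independent columns (a standard assumption for linear value function approximation) and $d_\mu(s) > 0$ for all $s$, which follows from Assumption~\ref{assu ergodic}. Second, each entry of $\P_{\bar\rho}$ is a piecewise-linear, hence continuous, function of $\bar\rho$, so $\A(\bar\rho)$ and its symmetric part are continuous in $\bar\rho$. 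Because the minimum eigenvalue of a symmetric matrix is a continuous function of its entries, and is strictly positive at $\bar\rho = 0$, it remains strictly positive on some open interval $[0, \tau)$. Taking this $\tau$ yields the claim.

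The main obstacle is minor and concerns two subtleties: (i) the non-symmetric notion of positive definiteness, which reduces to the symmetric-part criterion used above; and (ii) verifying the identification of the expected update matrix as $\bPhi^T(\I - (\P_{\bar\rho}^T)^n)\D_\mu\bPhi$. The paper defers the latter to the appendix, but it follows from the i.i.d.\ setup: $S_0^k \sim d_\mu$ contributes the factor $\D_\mu$, the clipped one-step rollout gives transition matrix $\P_{\bar\rho}$, and taking $n$ transposed steps for the time-reversed direction yields the $(\P_{\bar\rho}^T)^n$ factor. Since the proposition is non-quantitative, continuity alone suffices; an explicit $\tau$ could be obtained by bounding $\|(\P_{\bar\rho}^T)^n\|$ by a polynomial in $\bar\rho$ (using that each entry of $\P_{\bar\rho}$ is at most $\bar\rho$) and comparing it to $\lambda_{\min}(\bPhi^T \D_\mu \bPhi)$.
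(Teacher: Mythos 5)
Your argument is correct, but it takes a genuinely different route from the paper's. You prove positive definiteness of $\A(\bar\rho)=\bPhi^T(\I-(\P_{\bar\rho}^T)^{n})\D_\mu\bPhi$ by a perturbation argument: at $\bar\rho=0$ the matrix collapses to $\bPhi^T\D_\mu\bPhi$, which is symmetric p.d.\ under ergodicity and full column rank of $\bPhi$ (Assumption~\ref{assu full rank}), and continuity of the smallest eigenvalue of the symmetric part then gives an open interval $[0,\tau)$ of stable thresholds. This is sound (including your reduction of the non-symmetric p.d.\ notion to the symmetric part) and it suffices for the purely existential statement of Proposition~\ref{prop trace clip}. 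The paper instead applies Lemma~\ref{lem sutton}: it reduces stability to strict elementwise positivity of the row vector $\d_\mu^\top(\I-\P_{\bar\rho}^{n})$ and bounds $\P_{\bar\rho}^{n}(s,s')\leq \P_\mu^{n}(s,s')\,u_{\bar\rho}^{n}\gamma^{n}$ with $u_{\bar\rho}\doteq\max_{s,a}\min(\rho(a|s),\bar\rho)$ and $\gamma\doteq\max_s\gamma(s)$, yielding $k(s)\geq d_\mu(s)(1-u_{\bar\rho}^{n}\gamma^{n})$. What that buys is an explicit, feature-independent threshold $\tau=1/\gamma$, which the surrounding text uses to justify the practical choice $\bar\rho=1$ and to connect clipping to bounded trace variance; your continuity argument gives no usable constant, and your proposed quantitative refinement (comparing $\norm{(\P_{\bar\rho}^T)^{n}}$ to $\lambda_{\min}(\bPhi^T\D_\mu\bPhi)$) would produce a feature-dependent and typically far more conservative bound than the paper's, since the diagonal-dominance route operates entirely at the level of the state-space matrices before they are sandwiched by $\bPhi$. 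Conversely, your approach is more elementary and does not need the row-sum machinery of Lemma~\ref{lem sutton}.
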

See its proof in the appendix. One such constant is $\tau = \max_s 1/\gamma(s)$ where the maximum of discounts $\gamma(s)$ is over states. Notice that while achieving stability, clipping at $1/\gamma$ also restricts variance of learning to a finite amount since the Monte Carlo ETD($n$) trace is bounded. In practice, we tune $\bar \rho$ to optimize a bias-stability trade-off. 

\paragraph{Auxiliary Monte-Carlo loss} In most learning settings (e.g., \cite{mnih2015}), both sequential samples and i.i.d. samples are available.
% some amount of online data is available, even when using experience replays. 
To take advantage of this fact, we can stabilize the emphasis learning by partially regressing on the {\notlearned} emphatic trace. We can thus learn the parameters $\thet$ by TD-learning using samples from the replay buffer and by Monte Carlo learning using online experience:
\begin{align}
\label{eq sequential_iid_update}
\thet_{k+1} = \thet_k &+ \textstyle{ \alpha_k^{\thet}\left[ \left(\prod_{t=1}^{n} \gamma^k_t\rho^k_{t-1}\right) f_{\thet_k}(S^k_{0}) + 1 - f_{\thet_k}(S^k_{n})  \right] \nabla_{\thet_k} f_{\thet_k} (S^k_{n}) }\\
&+ \alpha_k^\theta \beta \left(F_k - f_{\theta_k}(S_k) \right) \nabla_{\theta_k} f_{\theta_k}(S_k).
\end{align}
This update corresponds to the joint loss function:
\begin{align}
\textstyle{\mathcal{L}^{F, \text{MC}}_k \doteq \left[\left(\prod_{t=1}^{n} \gamma^k_t\rho^k_{t-1}\right) f_{\thet_k}(S^k_{0}) + 1 - f_{\theta_k}(S^k_{n})\right]^2 + \beta (f_{\theta_k}(S_k) - F_k)^2},
\end{align}
where $\beta$ is a hyper-parameter for balancing the two losses.
When $f_\theta$ uses linear function approximation, we prove the following guarantee on its stability (proof in the appendix).
\begin{proposition}
\label{prop trace monte carlo}
There exists a constant $\xi$ such that 
the update in \eqref{eq sequential_iid_update} is stable whenever $\beta > \xi$.
\end{proposition}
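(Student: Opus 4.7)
The plan is to decompose the expected update matrix $A \doteq \lim_{k\to\infty} \E[A_k]$ into a contribution from the time-reversed TD loss and a contribution from the Monte Carlo regression loss, and to show that the (symmetric, positive-definite) Monte Carlo contribution dominates once $\beta$ is sufficiently large.

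\textbf{Step 1 -- identify the two expected update matrices.} Substituting $f_\theta(s) = \theta^\top \boldsymbol\phi(s)$ into~\eqref{eq sequential_iid_update} and rewriting the increment in the form $\alpha_k^\theta(b_k - A_k \theta_k)$, I would split $A_k = A_k^{\text{TD}} + \beta A_k^{\text{MC}}$. Taking expectations in the i.i.d.\ setting, using that $S_0^k\sim d_\mu$ together with stationarity gives $S_n^k\sim d_\mu$, and that the product $\prod_{t=1}^n \gamma_t^k \rho_{t-1}^k$ converts a $\mu$-expectation into an $n$-step discounted $\pi$-expectation, yields
\begin{align}
A_1 \doteq \E[A_k^{\text{TD}}] = \bPhi^\top \bigl(\I - (\bGamma \P_\pi^\top)^n\bigr) \D_\mu \bPhi, \qquad A_2 \doteq \E[A_k^{\text{MC}}] = \bPhi^\top \D_\mu \bPhi,
\end{align}
so that $A = A_1 + \beta A_2$. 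The matrix $A_1$ coincides with the (possibly non-p.d.) update matrix mentioned after~\eqref{eq:backward_td}.

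\textbf{Step 2 -- dominate via Weyl's inequality and choose $\xi$.} Under the standard assumption that $\bPhi$ has full column rank (and $d_\mu(s) > 0$ for all $s$ by Assumption~\ref{assu ergodic}), $A_2$ is symmetric positive definite, with smallest eigenvalue $\lambda \doteq \lambda_{\min}(A_2) > 0$. On the other hand, $A_1$ has spectral norm bounded by some constant $C$ independent of $\beta$, because $\bPhi$, $\bGamma$, $\D_\mu$ and $\P_\pi$ are all bounded (and $\calS$ is finite); hence $\lambda_{\min}\bigl(\tfrac{1}{2}(A_1 + A_1^\top)\bigr) \ge -C$. Applying Weyl's inequality to $\tfrac{1}{2}(A + A^\top) = \tfrac{1}{2}(A_1 + A_1^\top) + \beta A_2$ gives $\lambda_{\min}\bigl(\tfrac{1}{2}(A + A^\top)\bigr) \ge \beta \lambda - C$, which is strictly positive for every $\beta > \xi \doteq C/\lambda$. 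This yields exactly the stability condition used in the paper.

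The main technical obstacle is Step~1: because $A_1$ is generally asymmetric (the TD semi-gradient produces an outer product $\boldsymbol\phi(S_n^k)\boldsymbol\phi(S_0^k)^\top$ rather than a Gram-style matrix), one cannot control its eigenvalues directly and must instead argue through its symmetric part, which is why the bound in Step~2 passes through the crude spectral-norm estimate $\|A_1\|_2 \le C$ rather than any sharper structural property. Once the decomposition $A = A_1 + \beta A_2$ is set up cleanly, Step~2 is a routine Weyl-type perturbation argument.
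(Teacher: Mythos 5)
Your proof is correct, and Step~1 reproduces exactly the decomposition the paper uses: the asymptotic update matrix is $\A = \bPhi^\top\big((1+\beta)\I - (\bGamma\P_\pi^\top)^n\big)\D_\mu\bPhi = A_1 + \beta\,\bPhi^\top\D_\mu\bPhi$ with $A_1$ the (possibly non-p.d.) time-reversed TD matrix. Where you diverge is the final step. The paper does not symmetrize and bound eigenvalues; it invokes its Lemma~1 (the Sutton-style ``key matrix'' lemma, proved via strict diagonal dominance), which reduces positive definiteness of $\bPhi^\top\D(\I-\P)\bPhi$ to the row vector $\mathbf{1}^\top\D_\mu\big((1+\beta)\I - (\P_\pi\bGamma)^n\big)$ being elementwise positive, yielding the explicit, \emph{feature-independent} threshold $\xi = \max_s \big(\d_\mu^\top(\P_\pi\bGamma)^n\big)(s)/d_\mu(s) - 1$. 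Your Weyl-inequality argument on the symmetric part is more elementary and self-contained (no diagonal-dominance machinery), but it buys a weaker, \emph{feature-dependent} threshold $\xi = \norm{A_1}/\lambda_{\min}(\bPhi^\top\D_\mu\bPhi)$, which degrades as the features become ill-conditioned even though the underlying MDP-level condition does not. Both routes require $\bPhi$ to have full column rank (you state it explicitly; the paper needs it implicitly since Lemma~1 assumes it, and without it $\A$ cannot be p.d.\ in any case), and both ultimately certify positive definiteness of the asymmetric $\A$ through its symmetric part, so the arguments are equally rigorous; the paper's is simply sharper about what $\xi$ is.
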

The time-reversed TD update can be unstable, whereas the Monte Carlo update target $F_k$ can have large variance \citep{sutton2016emphatic, jiang2021}. By choosing $\beta$,
we optimize a variance-stability trade off.

\section{Expected Emphatic TD learning}
\label{sec:x_netd_vtrace_theory}

To prevent deadly triads, we use the learned expected emphasis $f_\thet$ to re-weight the learning updates of TD($n$). In this section, we analyze the resulting algorithm, {\model}.
For simplicity,
let the trace model $f_\theta$ be parameterized by a fixed parameter $\theta$.\footnote{This is not a special setting since the expected emphasis learning process is independent from learning parameters $\w$. } In this section, we analyze {\model} in the sequential setting for the ease of presentation. A similar analysis would apply to {\model} in the i.i.d. setting.
Then {\model} updates $\w$ iteratively as
\begin{align}
\label{eq xetdn}
\w_{t+1} &= \w_t + \alpha_t^{\w} f_{\thet}(S_t) \Delta^\w_t,\\
\text{where } \Delta^\w_t &\doteq \textstyle{\sum_{k=t}^{t+n-1} \left(\prod_{i=t}^{k-1}\gamma_{i+1}\rho_i\right)\rho_k  
(R_{k+1} + \gamma_{k+1}\w_t^\top\boldsymbol{\phi}(S_{k+1})-\w_t^\top\boldsymbol{\phi}(S_t)) \boldsymbol{\phi}(S_t)}.\notag
\end{align}
Equivalently, we can write~\eqref{eq xetdn} as
\begin{align}
\w_{t+1} &= \w_t + \alpha^\w_t (\b_t - \A_t \w_t), \qq{where} \\
\A_t & \textstyle{\doteq f_{\thet}(S_t)\boldsymbol{\phi}(S_t) \sum_{k=t}^{t+n-1} \left(\prod_{i=t}^{k-1}\gamma_{i+1}\rho_i\right)\rho_k \left[\boldsymbol{\phi}(S_k) - \gamma_{k+1} \boldsymbol{\phi}(S_{k+1})\right]^\top} \\
\b_t & \textstyle{\doteq f_{\thet}(S_t)\sum_{k=t}^{t+n-1} \left(\prod_{i=t}^{k-1}\gamma_{i+1}\rho_i\right)\rho_k R_{k+1} \boldsymbol{\phi}(S_t)}
\end{align}

As we use $f_\theta(S_t)$ to reweight the update,
it is convenient to define a diagonal matrix $\D_\mu^\theta$ with diagonal entries $[\D_\mu^\theta]_{ss} \doteq d_\mu(s) f_\theta(s)$ for any state $s$.
In X-ETD($n$),
we approximate $\lim_{t\to \infty} \E_\mu[F_t | S_t = s]$ with $f_\theta(s)$.
In  this,
we also define a ground-truth diagonal matrix $\D_\mu^f$ such that $[\D_\mu^f]_{ss} \doteq d_\mu(s) \lim_{t\to \infty} \E_\mu[F_t | S_t = s]$, and their difference,
$\D_\mu^\epsilon \doteq \D_\mu^\theta - \D_\mu^f$,
is the ($d_\mu$-weighted) function approximation error matrix of the emphasis approximation.
It can be computed that
\begin{align}
    \textstyle{\A \doteq \lim_{t\to\infty} \E[\A_t] = \bPhi^\top \D_\mu^\theta (\I - (\P_\pi \bGamma)^{n}) \bPhi, \quad
    \b \doteq \lim_{t\to\infty} \E[\b_t] = \bPhi^\top \D_\mu^\theta \r_\pi^n},
\end{align}
where $\r_\pi^n \doteq \sum_{i=0}^{n-1} (\P_\pi \bGamma)^i \r_\pi$ is the $n$-step reward vector with $\r_\pi(s) \doteq \sum_a \pi(a|s) r(s, a)$.

\subsection{Variance}
\label{sec:variance}
Learning to estimate the emphatic trace not only makes value-learning compatible with offline learning methods that make use of replay buffers, but is also instrumental in reducing the variance of the value-learning updates. 
The incremental update of ETD($n$) in~\eqref{eq:etdn} can be rewritten as $F_t \Delta^\w_t$.

The following proposition shows that when the trace approximation error is small enough,
variance in learning can indeed be reduced by replacing the {\notlearned} trace $F_t$ with the learned trace $f_\theta(S_t)$.
\begin{proposition}
\label{prop xnetd reduced var}
(Reduced variance)
Let $\epsilon_s \doteq |f_\theta(s) - f(s)|$ be the trace approximation error at a state $s$. 
For any $s$,
there exists a time $\bar t > 0$, such that for all $t > \bar t$,
\begin{align}
\epsilon_s(\epsilon_s + 2f(s)) < \V(F_t | S_t = s) \, \implies \, \V(f_\theta(S_t) \Delta_t^\w | S_t = s) \leq \V(F_t \Delta_t^\w | S_t = s).
\end{align}
The inequality is strict if $\V(\Delta_t^\w | S_t = s) > 0$.
\end{proposition}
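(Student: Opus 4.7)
The key structural fact is that, conditional on $S_t = s$, $F_t$ and $\Delta_t^\w$ are independent. Indeed, $F_t$ is measurable with respect to the past $(S_0, A_0, \ldots, A_{t-1}, S_t)$ via the recursion~\eqref{eq:followon_trace}, whereas $\Delta_t^\w$ depends only on the future $(S_t, A_t, \ldots, S_{t+n})$, so the Markov property decouples them given $S_t = s$. The plan is to exploit this conditional independence to compare the two variances term by term, and then use the defining limit $\lim_{t\to\infty}\E_\mu[F_t \mid S_t=s] = f(s)$ together with the bound $|f_\theta(s) - f(s)| \le \epsilon_s$ to sign the difference.

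\textbf{Main steps.} First, I would apply the identity $\V(XY) = \V(X)\V(Y) + \V(X)\E[Y]^2 + \E[X]^2\V(Y)$ for independent $X$ and $Y$ to expand
\begin{align}
\V(F_t \Delta_t^\w \mid S_t = s) = v_X^t v_Y^t + v_X^t m_t^2 + c_t^2 v_Y^t,
\end{align}
where $c_t \doteq \E[F_t \mid S_t=s]$, $v_X^t \doteq \V(F_t \mid S_t=s)$, $m_t \doteq \E[\Delta_t^\w \mid S_t=s]$, and $v_Y^t \doteq \V(\Delta_t^\w \mid S_t=s)$. Since $f_\theta(s)$ is constant given $S_t = s$, we also have $\V(f_\theta(S_t)\Delta_t^\w \mid S_t = s) = f_\theta(s)^2 v_Y^t$. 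Subtracting yields
\begin{align}
\V(F_t\Delta_t^\w \mid S_t=s) - \V(f_\theta(S_t)\Delta_t^\w \mid S_t=s) = \bigl(v_X^t + c_t^2 - f_\theta(s)^2\bigr)\, v_Y^t + v_X^t m_t^2.
\end{align}

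Second, since $f(s), f_\theta(s) \ge 0$ (the follow-on trace satisfies $F_t \ge 1$) and $|f_\theta(s) - f(s)| \le \epsilon_s$, I would use $f_\theta(s)^2 \le (f(s) + \epsilon_s)^2 = f(s)^2 + \epsilon_s(\epsilon_s + 2f(s))$. Writing $K \doteq \epsilon_s(\epsilon_s + 2f(s))$, this rearranges to $v_X^t + c_t^2 - f_\theta(s)^2 \ge (v_X^t - K) + (c_t^2 - f(s)^2)$. Third, I would invoke $c_t \to f(s)$ to choose $\bar t$ large enough that $|c_t^2 - f(s)^2|$ is dominated by the strictly positive gap $v_X^t - K$ supplied by the hypothesis; then the bracketed term is non-negative, and combined with $v_X^t m_t^2 \ge 0$ this gives the desired variance inequality. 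Strictness when $v_Y^t > 0$ follows because $(v_X^t - K)\, v_Y^t > 0$ strictly.

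\textbf{Main obstacle.} The delicate step is the third one: although $c_t^2 - f(s)^2 = o(1)$, at any finite $t$ it could be negative, so the strict hypothesis $v_X^t > K$ is essential for absorbing this residual error. Making this quantitative is cleanest under the existence of the limiting variance $\lim_{t\to\infty}\V(F_t\mid S_t=s)$, which is consistent with the limits already assumed in Section~\ref{sec:learn_emphasis}; once that is in hand, the remainder is a mechanical application of the variance-of-product decomposition under conditional independence.
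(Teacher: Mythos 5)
Your proposal is correct and follows essentially the same route as the paper's proof: the same conditional-independence product-variance decomposition, the same bound $f_\theta(s)^2 \le f(s)^2 + \epsilon_s(\epsilon_s + 2f(s))$, and the same use of $\E[F_t \mid S_t=s] \to f(s)$ to pick $\bar t$ absorbing the residual $c_t^2 - f(s)^2$ into the gap supplied by the hypothesis. The "obstacle" you flag (the gap $\V(F_t\mid S_t=s) - K$ varies with $t$, so the choice of $\bar t$ needs some uniformity) is present in, and glossed over by, the paper's argument as well.
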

(Proof in the appendix.)
In some cases $\V(F_t | S_t = s)$ can be infinite \citep{sutton2016emphatic}; then the condition in Proposition~\ref{prop xnetd reduced var} holds trivially. This also underpins the importance of variance reduction.

\subsection{Convergence}
\label{sec:convergence}

Next, under the following assumption about the learning rate, 
we show the convergence of \eqref{eq xetdn}.
\begin{assumption}
\label{assu learning rate}
(Learning rates)
    The learning rates $\qty{\alpha_t^\w}_{t=0}^\infty$ are nonnegative, deterministic, and satisfy
    $\sum_t \alpha_t^\w = \infty, \sum_t (\alpha_t^\w)^2 < \infty$.
\end{assumption}
\begin{theorem}
\label{thm convergence x-netd vtrace}
(Convergence of X-ETD($n$))
Under Assumptions~\ref{assu ergodic}-\ref{assu learning rate},
for the iterates $\qty{\w_t}$ generated by \eqref{eq xetdn},
there exists a constant $\eta > 0$ such that
\begin{align}
    \textstyle{\norm{\D_\mu^\epsilon} < \eta \implies \lim_{t \to \infty} \w_t = \A^{-1}\b} \quad a.s..
\end{align}
\end{theorem}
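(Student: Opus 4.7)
The plan is to first establish that the limiting drift matrix $\A$ is positive definite whenever the trace approximation error matrix $\D_\mu^\epsilon$ has sufficiently small operator norm, and then invoke a standard linear stochastic approximation theorem for Markov noise to conclude that $\w_t \to \A^{-1}\b$ almost surely.

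For the positive-definiteness step, I would begin from the ``ground-truth'' matrix $\A_f \doteq \bPhi^\top \D_\mu^f (\I - (\P_\pi \bGamma)^n) \bPhi$, which corresponds to ETD($n$) with the exact expected emphasis. Under Assumptions~\ref{assu ergodic}--\ref{assu coverage}, $\A_f$ is positive definite by the ETD($n$) stability result of \citet{jiang2021} taken in expectation: by construction, $[\D_\mu^f]_{ss} = d_\mu(s)\lim_{t\to\infty}\E_\mu[F_t \mid S_t = s]$ is precisely the stationary emphasis-weighted state distribution of the followon recursion~\eqref{eq:followon_trace}. I would then write $\A = \A_f + \bPhi^\top \D_\mu^\epsilon (\I - (\P_\pi \bGamma)^n) \bPhi$, bound the perturbation in operator norm by $\norm{\D_\mu^\epsilon} \cdot \norm{\I - (\P_\pi \bGamma)^n} \cdot \norm{\bPhi}^2$, and apply Weyl's inequality to the smallest eigenvalue of the symmetric part $\tfrac{1}{2}(\A + \A^\top)$. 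This yields an explicit threshold $\eta > 0$ — essentially $\lambda_{\min}(\tfrac{1}{2}(\A_f+\A_f^\top))$ divided by twice the operator-norm factor above — such that $\norm{\D_\mu^\epsilon} < \eta$ keeps $\A$ positive definite.

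With a positive definite $\A$ in hand, the recursion~\eqref{eq xetdn} is a linear stochastic approximation $\w_{t+1} = \w_t + \alpha_t^\w(\b_t - \A_t \w_t)$ driven by the Markov chain induced by $\mu$. Features, rewards, discounts, and importance ratios are all bounded (the last by Assumption~\ref{assu coverage}), and $f_\theta$ can be assumed bounded from the choice of function class, so $\A_t$ and $\b_t$ are uniformly bounded, and their Cesàro averages converge to $\A$ and $\b$ respectively by ergodicity (Assumption~\ref{assu ergodic}). Combined with the Robbins--Monro step sizes in Assumption~\ref{assu learning rate}, a standard Markov-noise linear stochastic approximation theorem (in the spirit of the Tsitsiklis--Van Roy linear-TD convergence argument, or equivalently an ODE-method argument for Markovian driving noise) implies that $\w_t$ almost surely tracks the ODE $\dot{\w} = \b - \A\w$, whose unique globally asymptotically stable equilibrium is $\A^{-1}\b$.

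The main obstacle will be the first step — producing an honest, non-vacuous $\eta$ — because $\A$ is in general non-symmetric, so its positive-definiteness cannot be read off from eigenvalue comparisons with $\A_f$ directly. One has to pass through the symmetric part, invoke Weyl's inequality for Hermitian perturbations on $\tfrac{1}{2}(\A + \A^\top)$, and check that the $\D_\mu^\epsilon$-based perturbation bound survives symmetrisation. Once this is secured, the remainder is routine bookkeeping: re-deriving the Jiang et al.\ p.d.\ claim in the expected-emphasis form, and verifying that the standard regularity conditions for Markov-noise stochastic approximation are met in the present setting.
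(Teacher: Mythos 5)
Your proposal is correct and follows essentially the same route as the paper: the paper's Lemma~\ref{lem pd} establishes positive definiteness of $\A$ by lower-bounding the quadratic form of $\D_\mu^f(\I-(\P_\pi\bGamma)^n)$ (positive definite by \citet{jiang2021}) and subtracting an operator-norm bound on the $\D_\mu^\epsilon$ perturbation — which is exactly your Weyl-on-the-symmetric-part argument in different clothing — and then the convergence itself is obtained by casting \eqref{eq xetdn} as linear stochastic approximation driven by the ergodic chain $Z_t=(S_t,A_t,\dots,S_{t+n+1})$ and checking the conditions of Proposition 4.8 of \citet{bertsekas1995neuro} (Robbins–Monro steps, ergodicity, p.d.\ drift, boundedness, geometric mixing), matching your invocation of a standard Markov-noise linear SA theorem. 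The only detail worth flagging is that the positive-definiteness step also relies on the full-column-rank feature assumption (Assumption~\ref{assu full rank}), which your argument uses implicitly through the nonsingularity of $\A_f$.
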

The proof of this theorem is in the appendix, along with a stability guarantee for the {\model} updates.
Theorem~\ref{thm convergence x-netd vtrace} shows that under some mild conditions,
assuming the function approximation error is not too large,
X-ETD($n$) converges to $\w_\infty \doteq \A^{-1}\b.$.

We now study the performance of $\w_\infty$,
i.e.,
the distance between the value prediction by $\w_\infty$ and the true value function $\v_\pi$.
\begin{proposition}
\label{prop fixed point performance}
(Suboptimality of the fixed point)
Under Assumptions~\ref{assu ergodic} \&~\ref{assu full rank},
there exists positive constants $c_1, c_2,$ and $c_3$ such that
\begin{align}
    \textstyle{\norm{\D_\mu^\epsilon} \leq c_1 \implies \norm{\bPhi \w_\infty - \v_\pi} \leq
    c_2 \norm{\D_\mu^\epsilon} + c_3 \norm{\Pi_{\D_\mu^f} \v_\pi - \v_\pi}_{\D_\mu^f}},
\end{align}
\end{proposition}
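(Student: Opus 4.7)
The plan is to compare $\w_\infty$ with the \emph{oracle} fixed point $\w^\star \doteq (\A^f)^{-1}\b^f$ that X-ETD($n$) would converge to if the learned emphasis coincided with the ground-truth expected emphasis, where $\A^f \doteq \bPhi^\top \D_\mu^f (\I - (\P_\pi\bGamma)^n) \bPhi$ and $\b^f \doteq \bPhi^\top \D_\mu^f \r_\pi^n$. By the $n$-step stability result of \citet{jiang2021} (the $n$-step emphatic analogue of the Sutton--Mahmood--White key lemma), $\A^f$ is positive definite and hence invertible, so $\w^\star$ is well defined. The proof then splits
$$\norm{\bPhi \w_\infty - \v_\pi} \leq \norm{\bPhi}\,\norm{\w_\infty - \w^\star} + \norm{\bPhi \w^\star - \v_\pi}$$
and bounds the two pieces separately; the first produces the $c_2\norm{\D_\mu^\epsilon}$ term and the second the projection-error term.

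For the first piece I would carry out a standard perturbation analysis of the linear system $\A\w_\infty = \b$. Since $\A - \A^f = \bPhi^\top \D_\mu^\epsilon (\I - (\P_\pi\bGamma)^n) \bPhi$ and $\b - \b^f = \bPhi^\top \D_\mu^\epsilon \r_\pi^n$ are both linear in $\D_\mu^\epsilon$, a Neumann-series argument yields a threshold $c_1 > 0$ such that whenever $\norm{\D_\mu^\epsilon} \leq c_1$, $\A$ is invertible and $\norm{\A^{-1}}$ is uniformly bounded (e.g., by $2\norm{(\A^f)^{-1}}$). Writing $\w_\infty - \w^\star = \A^{-1}\bigl((\b - \b^f) - (\A - \A^f)\w^\star\bigr)$ and applying the triangle inequality gives $\norm{\w_\infty - \w^\star} \leq C\norm{\D_\mu^\epsilon}$, with $C$ depending only on $\bPhi$, $(\P_\pi\bGamma)^n$, $\r_\pi^n$, $\norm{\w^\star}$, and $\norm{(\A^f)^{-1}}$.

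For the second piece, $\w^\star$ satisfies the projected $n$-step Bellman equation $\bPhi \w^\star = \Pi_{\D_\mu^f} T^n \bPhi \w^\star$, where $T^n \v \doteq \r_\pi^n + (\P_\pi\bGamma)^n \v$ and $\v_\pi = T^n \v_\pi$. Decomposing $\bPhi\w^\star - \v_\pi$ into its $\D_\mu^f$-orthogonal components in and orthogonal to $\operatorname{range}(\bPhi)$, and using
$$\bPhi \w^\star - \Pi_{\D_\mu^f} \v_\pi = \Pi_{\D_\mu^f} (\P_\pi\bGamma)^n (\bPhi \w^\star - \v_\pi),$$
a Pythagorean rearrangement gives
$$\norm{\bPhi \w^\star - \v_\pi}_{\D_\mu^f}^2 \leq \frac{1}{1 - \kappa^2} \norm{\Pi_{\D_\mu^f} \v_\pi - \v_\pi}_{\D_\mu^f}^2$$
whenever $\kappa \doteq \norm{\Pi_{\D_\mu^f} (\P_\pi\bGamma)^n}_{\D_\mu^f} < 1$. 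Converting from the $\D_\mu^f$-weighted norm to the unweighted one on the left-hand side, using that the diagonal of $\D_\mu^f$ is bounded above and away from zero on the support of $d_\mu$, yields a constant $c_3$ absorbing $\norm{\bPhi}$, $(1-\kappa^2)^{-1/2}$, and the norm-equivalence factor.

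The main obstacle is establishing $\kappa < 1$ strictly, rather than merely the positive definiteness of $\A^f$. Although morally equivalent, translating PD of the Galerkin matrix $\bPhi^\top \D_\mu^f (\I - (\P_\pi\bGamma)^n) \bPhi$ into a strict spectral bound on $\Pi_{\D_\mu^f}(\P_\pi\bGamma)^n$ under the $\D_\mu^f$-norm requires identifying the generalised eigenvalues of $(\P_\pi\bGamma)^n$ relative to $\D_\mu^f$ restricted to $\operatorname{range}(\bPhi)$ and showing that the stability of $\A^f$ forces them to lie strictly inside the unit disk; the resulting $\kappa$ is precisely what determines $c_3$.
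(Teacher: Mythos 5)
Your overall architecture matches the paper's: compare $\w_\infty$ to the unbiased fixed point $\w^\star=(\A^f)^{-1}\b^f$, bound $\norm{\w_\infty-\w^\star}$ by a linear-system perturbation argument that is linear in $\norm{\D_\mu^\epsilon}$ (the paper uses a singular-value perturbation bound from Golub--Van Loan where you use a Neumann series; both are fine), and then control $\norm{\bPhi\w^\star-\v_\pi}$ by the projection error. The one substantive divergence is the last step: the paper simply cites Lemma 1 and Theorem 1 of White (2017), whereas you attempt to derive the oblique-projection bound from scratch --- and the step you yourself flag as the ``main obstacle'' is a genuine gap, and the route you sketch for closing it would not work.

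Concretely: positive definiteness of (the symmetric part of) $\A^f=\bPhi^\top\D_\mu^f(\I-M)\bPhi$ with $M\doteq(\P_\pi\bGamma)^{n}$ only controls the quadratic form $\langle x,(\I-M)x\rangle_{\D_\mu^f}$ for $x\in\operatorname{range}(\bPhi)$; it gives no control on $\norm{Mx}_{\D_\mu^f}$, and ``generalised eigenvalues strictly inside the unit disk'' is in any case weaker than the operator-norm bound $\kappa=\norm{\Pi_{\D_\mu^f}M}_{\D_\mu^f}<1$ that your Pythagorean step needs (a matrix can have spectral radius below one and operator norm above one, and your error vector $\bPhi\w^\star-\v_\pi$ does not lie in $\operatorname{range}(\bPhi)$, so you cannot restrict to that subspace). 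The correct ingredient is not the positive definiteness of $\A^f$ but the followon identity: writing $m(s)\doteq d_\mu(s)\lim_t\E_\mu[F_t\mid S_t=s]$ for the diagonal of $\D_\mu^f$, the trace recursion gives $\mathbf{m}^\top(\I-M)=\d_\mu^\top$, i.e.\ $\mathbf{m}^\top M=\mathbf{m}^\top-\d_\mu^\top$ elementwise. Since $M$ is entrywise nonnegative with row sums at most one, Jensen's inequality yields
\begin{align}
\norm{Mx}_{\D_\mu^f}^2 \;\le\; \sum_{s}\big(\mathbf{m}^\top M\big)(s)\,x(s)^2 \;=\; \sum_{s}\big(m(s)-d_\mu(s)\big)x(s)^2 \;\le\; \Big(1-\min_s \tfrac{d_\mu(s)}{m(s)}\Big)\norm{x}_{\D_\mu^f}^2,
\end{align}
so $M$ is a strict $\D_\mu^f$-contraction and $\kappa\le\norm{M}_{\D_\mu^f}<1$ because the $\D_\mu^f$-orthogonal projection is nonexpansive. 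With that lemma inserted, your derivation goes through and recovers (in self-contained form) exactly the bound the paper obtains by citation; the remaining steps (the projected Bellman fixed-point identity for $\w^\star$, the Pythagorean decomposition, and the finite-dimensional norm equivalence using $d_\mu(s)>0$ and $f(s)\ge 1$) are all correct.
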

where $\norm{\Pi_{\D_\mu^f} \v_\pi - \v_\pi}_{\D_\mu^f}$ is the value estimation error of the unbiased fixed point using the {\notlearned} emphasis. We prove this proposition in the appendix. 

\begin{figure}[h!]
\centering
\includegraphics[width=0.9\columnwidth]{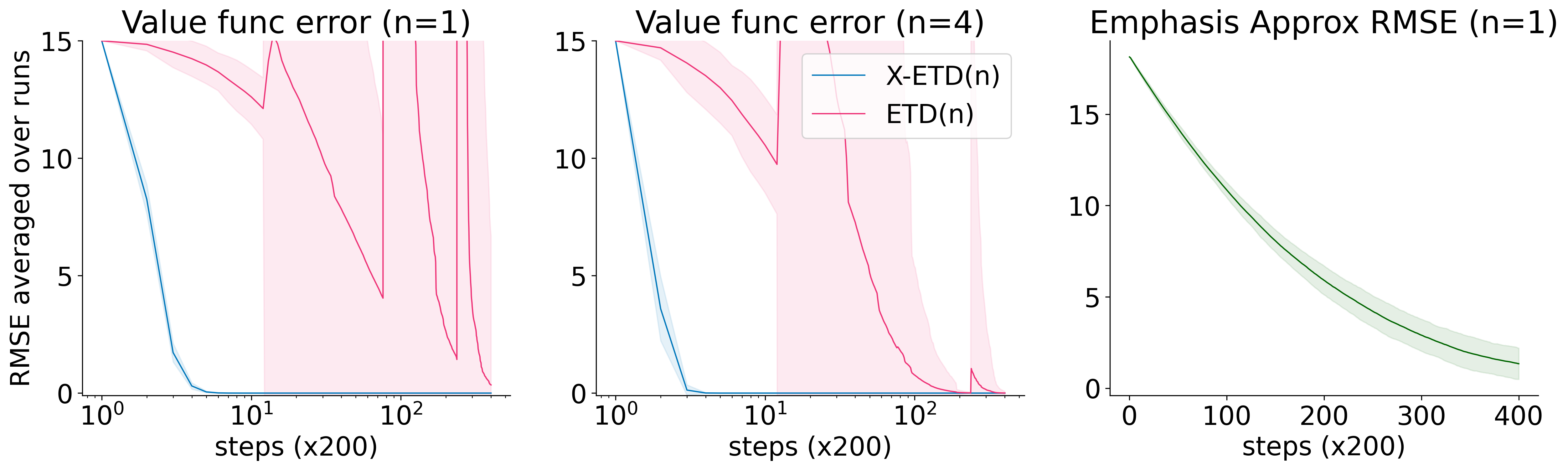}
\caption{RMSE in the value estimates and RMSE in expected trace approximation over time in a modified version of Baird's counterexample \citep{baird1995}. We report the performance of each algorithm using the best performing hyper-parameters (according to RMSE of the value function) from an extensive sweep (described in text). Shaded regions indicate two standard deviations of the mean performance computed from 100 independent runs.}
\vspace{-0.1cm}
\label{fig:baird_exp}
\end{figure}

\subsection{Illustration on Baird's counterexample}
We illustrate the theoretical results in this section on a small MDP modified from Baird's counterexample \citep{baird1995}. The MDP has seven states with linear features. The over-parametrizing features are designed to cause instability even though the true values can be represented. See \citet{sutton2018} for an extensive discussion and analysis of Baird's counterexample. As in \citet{zhang2020provably} we modify the MDP (shown in Fig.~\ref{fig:baird_mdp} in the appendix). The canonical version of Baird's MDP causes all but gradient TD  and residual gradient methods \citep{sutton2009fast} to diverge. Here, we are more interested in a challenging MDP and are not focused on extreme divergence cases. We use $\gamma=0.95$ as discount and a target policy $\pi(\text{solid}|\cdot)=0.3$. We tested all combinations of $\alpha^{\w} \in \{2^i: i=-6,\hdots,-14\}$ and $\alpha^\theta = \alpha^{\w} \beta$, with $\beta \in \{0.0005, 0.001,0.005, 0.01, 0.05,0.1,0.5,1.0, 2.0, 5.0\}$. Figure \ref{fig:baird_exp} summarizes the results with additional results in the appendix.   

{\model} was more stable in this small but challenging MDP. ETD($n$) exhibited high variance and instability. {\model} had very low variance, echoing the conclusion of Prop.~\ref{prop xnetd reduced var} that {\model} has lower variance when its emphasis errors are small. {\model} also converged faster to the true fixed point, 
illustrating Theorem~\ref{thm convergence x-netd vtrace} and, moreover, achieving the optimal fixed point, far better than the worst case upper bound of Prop.~\ref{prop fixed point performance}. Note, even when choosing $\alpha^\w$ and $\beta$ of {\model} to minimize the RMSE in the value function, the emphasis approximation error exhibits steady improvement. 

\section{Experiments}
\label{sec:experiments}
Back to the problem that motivated this paper, our goal is to stabilize learning at scale when using experience replay. Inspired by the performance achieved by Surreal \citep{jiang2021} and previously by StacX \citep{zahavy2020}, both extensions of IMPALA \citep{espeholt2018}, we adopt the same off-policy setting of learning auxiliary tasks to test {\model}, with the additional use of experience replay. It has become conventional to clip IS ratios to reduce variance and improve learning results \citep{espeholt2018, zahavy2020, hessel2021}. We similarly adapt {\model} to the control setting by clipping IS ratios at 1 in both policy evaluation, as described in Clipping of Sec.~\ref{sec:learn_emphasis}, and applying the learned emphatic weighting to the corresponding policy gradients. The setting is similar to \citet{jiang2021}; further details are in the appendix.

\paragraph{Data}
We evaluate {\model} on a widely used deep RL benchmark, Atari games from the Arcade Learning Environment \citep{bellemare2013}\footnote{Licensed under GNU General Public License v2.0.}. The input observations are in RGB format without downsampling or gray scaling. We use an action repeat of 4, with max pooling over the last two frames and the life termination signal. This is the same data format as that used in Surreal \citep{jiang2021} and StacX \citep{zahavy2020}. In addition, we randomly sample half of the training data from an experience replay buffer which contains the most recent 10,000 sequences of length 20. In order to compare with previous works, we use the conventional 200M online frames training scheme, with an evaluation phase at 200M-250M learning frames. 

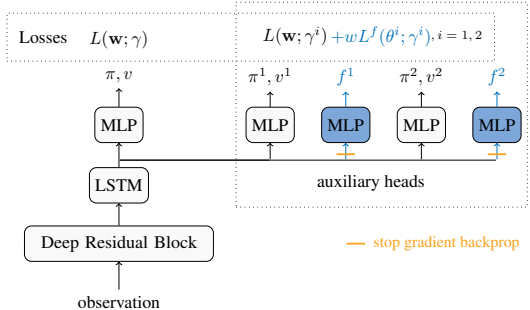
\begin{figure}[t]
\centering
\resizebox{0.5\columnwidth}{!}{
\begin{tikzpicture}
\node (resblock) [mlp] {Deep Residual Block};
\path (resblock.south)+(0, -0.8) node (obs) [inp] {observation};
\path (resblock.north)+(0, 0.8) node (lstm) [op] {LSTM};
\path (resblock.north)+(0, 2) node (mlp0) [op] {MLP};
\path (resblock.north)+(3, 2) node (mlp1) [op] {MLP};
\path (resblock.north)+(6, 2) node (mlp2) [op] {MLP};
\path (resblock.north)+(4.5, 2) node (mlp1_f) [top] {MLP};
\path (resblock.north)+(7.5, 2) node (mlp2_f) [top] {MLP};
\path (mlp0.north)+(0, 0.6) node (out0) [inp] {$\pi, v$};
\path (mlp1.north)+(0, 0.6) node (out1) [inp] {$\pi^1, v^1$};
\path (mlp2.north)+(0, 0.6) node (out2) [inp] {$\pi^2, v^2$};
\path (mlp1_f.north)+(0., 0.6) node (trace1) [tinp] {$f^1$};
\path (mlp2_f.north)+(0., 0.6) node (trace2) [tinp] {$f^2$};
\path (out0.north)+(0, 0.5) node (loss0) [inp] {$L(\w;\gamma)$};
\path (out1.north)+(0.5, 0.5) node (loss1) [inp] {$L(\w;\gamma^i)$};
\path (out1.north)+(2.2, 0.5) node (loss2) [tinp] {$+ w L^f(\theta^i;\gamma^i)$};
\path (out1.north)+(3.7, 0.5) node (loss3) [inp] {\scriptsize $, i=1, 2$};
\node[input] (l) at (-1.5, 4.1) {Losses};
\node[system,fit=(loss0) (loss1) (loss3) (l)] {};
\path (resblock.north)+(0, 1.3) node (null0) [null] {};
\path (null0)+(3, 0) node (null1) [null] {};
\path (null0)+(6, 0) node (null2) [null] {};
\path (null0)+(4.5, 0) node (null1_f) [null] {};
\path (null0)+(7.5, 0) node (null2_f) [null] {};
\path (null2)+(0, 3.) node (null_p1) [null] {};
\path (null_p1)+(0.85, 0) node (null_p2) [null] {};
\node[input] (aux) at (5, 1.2) {auxiliary heads};
\node[system,fit=(mlp1) (mlp2) (mlp2_f) (out1) (out2) (aux) (null1) (null2) (null_p1) (null_p2)] {};

\path [draw, ->] (obs.north) -- node [above] {} (resblock.south);
\path [draw, ->] (lstm.north) -- node [above] {} (mlp0.south);
\path [draw, ->] (resblock.north) -- node [above] {} (lstm.south);
\path [draw, -] (null0) -- node [above] {} (null1);
\path [draw, -] (null0) -- node [above] {} (null2_f);
\path [draw, ->] (null1) -- node [above] {} (mlp1.south);
\path [draw, ->] (null2) -- node [above] {} (mlp2.south);
\path [draw, ->, RoyalBlue] (null1_f) -- node [above] {} (mlp1_f.south);
\path [draw, ->, RoyalBlue] (null2_f) -- node [above] {} (mlp2_f.south);
\path [draw, ->] (mlp0.north) -- node [above] {} (out0.south);
\path [draw, ->] (mlp1.north) -- node [above] {} (out1.south);
\path [draw, ->] (mlp2.north) -- node [above] {} (out2.south);
\path [draw, ->, RoyalBlue] (mlp1_f.north) -- node [above] {} (trace1.south);
\path [draw, ->, RoyalBlue] (mlp2_f.north) -- node [above] {} (trace2.south);
\path (null1_f.north)+(-0.2, 0.1) node (sg1_left) [null] {};
\path (null1_f.north)+(0.2, 0.1) node (sg1_right) [null] {};
\path [draw, -, YellowOrange, line width=1pt] (sg1_left) -- node [above] {} (sg1_right);
\path (null2_f.north)+(-0.2, 0.1) node (sg2_left) [null] {};
\path (null2_f.north)+(0.2, 0.1) node (sg2_right) [null] {};
\path [draw, -, YellowOrange, line width=1pt] (sg2_left) -- node [above] {} (sg2_right);

\node[remark] (sg_text) at (6.5, 0) {\footnotesize stop gradient backprop};
\path (sg_text)+(-2, 0) node (sg_left) [null] {};
\path (sg_text)+(-1.6, 0) node (sg_right) [null] {};
\path [draw, -, YellowOrange, line width=1pt] (sg_left) -- node [above] {} (sg_right);
\end{tikzpicture}}
\caption{Block diagram of {\model}. Agent has one main head, two auxiliary heads and two trace heads with stop gradients. The subset of architecture in gray denotes the baseline agent Surreal, and those highlighted in blue are used in trace learning. We use the IMPALA loss on each head with different discounts $\gamma, \gamma^1, \gamma^2$. The behavior policy is fixed to be $\pi$. The predicted traces $f^1, f^2$ are learned with time-reversed TD losses $L^f(\theta^i;\gamma^i)$ weighted by $w$ for the auxiliary heads $i=1,2$.}
\label{fig:surreal}
\end{figure}

\paragraph{Baseline Agent} Surreal is an IMPALA-based agent that learns two auxiliary tasks with different discounts $\gamma^1, \gamma^2$ simultaneously while learning the main task (Fig.~\ref{fig:surreal}, in gray). The auxiliary tasks are learned off-policy since the agent generates behaviours only from the main policy output. Prior to applying {\model}, we swept extensively on its hyper-parameters to produce the best baseline we could find with 50\% replay data (see the appendix for further details and hyper-parameters). 

\paragraph{{\model} Agents}
We investigate whether {\model} updates can improve off-policy learning of the auxiliary tasks. For each of the two auxiliary tasks, we implement an additional Multilayer Perceptron (MLP) that predicts the expected emphatic trace using the time-reversed TD learning loss $L^f$ in \eqref{eq:learning_f} (see Fig.~\ref{fig:surreal}, in blue). The prediction outputs $f^i$ are then used to re-weight both the V-trace value and policy updates for the auxiliary task $i=1,2$, similar \citet{jiang2021}. In order to isolate the effect of using {\model} from any changes to internal representations as a result of the additional trace learning losses, we prevent the gradients from back-propagating to the core of the agent. We denote learned emphasis with auxiliary Monte Carlo loss as {\model}-MC, described in Sec.~\ref{sec:learn_emphasis}.
We implement all agents in a distributed system based on JAX libraries \citep{haiku2020, rlax2020, optax2020}\footnote{Licensed under Apache License 2.0.} using a TPU Pod infrastructure called Sebulba \citep{sebulba2021}.

\paragraph{Evaluation}
Running many seeds on all 57 Atari games is expensive. However a single metric with few seeds can be noisy, or hard to properly interpret. Hence we adopt a four-faceted evaluation strategy. We report mean and median human normalized training curves with standard deviations across 3 seeds (Fig.~\ref{fig:training_curves}), accompanied by a bar plot of per-game improvements in normalized scores averaged across 3 seeds and the evaluation window (Fig.~\ref{fig:barplot}). In addition, to test rigorously whether {\model} improved performance, we apply a one-sided \emph{Sign Test} \citep{Arbuthnot1712} on independent pairs of agent scores on 57 (games) x 3 (seeds) to compute its $p$-value, where scores are averaged across the evaluation window and the baseline and test agent seeds are paired randomly. To guarantee random pairing, we uniformly sample and pair the seeds 10,000 times and take the average number of games on which the test agent is better. The $p$-value is the probability of observing the stated results under the null hypothesis that the algorithm performs equally. Results might be thought of as statistically significant when $p<0.05$.

\paragraph{Results}

\begin{table}[h!]
\vspace{-0.3cm}
\begin{center}
\scalebox{0.9}{%
\begin{tabular}{l|ccc}
Statistics & X-ETD($n$) & X-ETD($n$)-MC & baseline: Surreal\\
\toprule
Median human normalized score   &503  &\textbf{537}  &525  \\
Mean human normalized score    &\textbf{2122} &2090  &1879\\
$\#$ games > baseline        &\textbf{97} (out of 171)  &\textbf{97} (out of 171)  &N/A\\
$p$-value from Sign Test    &\textbf{0.046}  &\textbf{0.046} &N/A\\
\bottomrule
\end{tabular}}
\end{center}
\caption{Performance statistics for Surreal and learned emphases applied to Surreal on 57 Atari games. Scores are human normalized, averaged across 3 seeds and across the evaluation phase (200M-250M).}
\label{tab:atari_results}
\vspace{-0.3cm}
\end{table}

\begin{figure}[t]
\centering
\includegraphics[width=0.8\columnwidth]{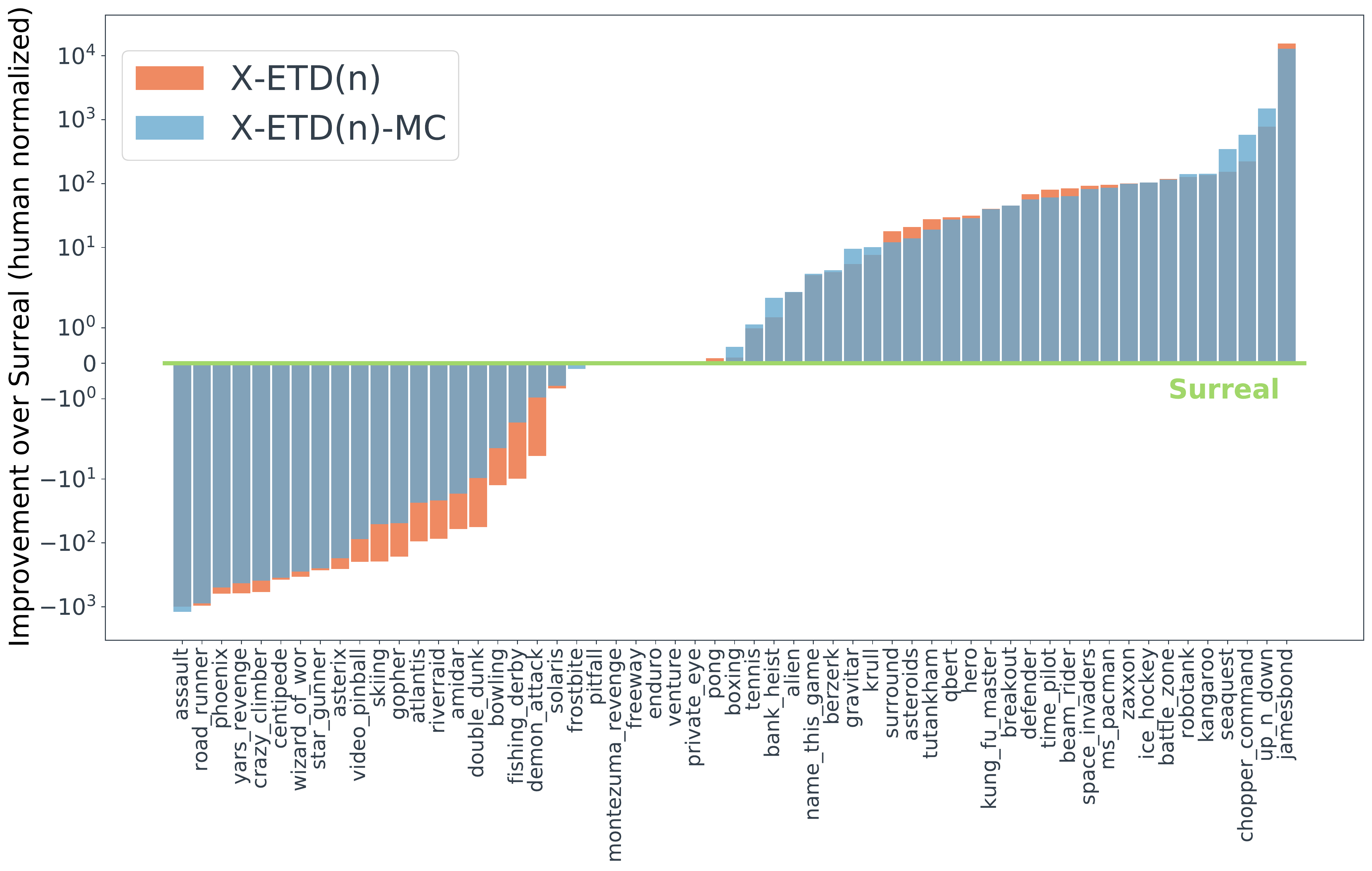}
\caption{Improvement in individual human normalized game scores compared to Surreal: X-ETD($n$) in orange and X-ETD($n$)-MC in blue. Both improve over baseline Surreal in median human normalized scores on 57 Atari games, and scores improved in more games than in which they deteriorated. Results averaged between 200M - 250M frames (evaluation phase) across 3 seeds.}
\vspace{-0.5cm}
\label{fig:barplot}
\end{figure}

\begin{figure}[h!]
\centering
\subfigure[Median human normalized scores]{
\includegraphics[width=0.35\columnwidth, height=2in]{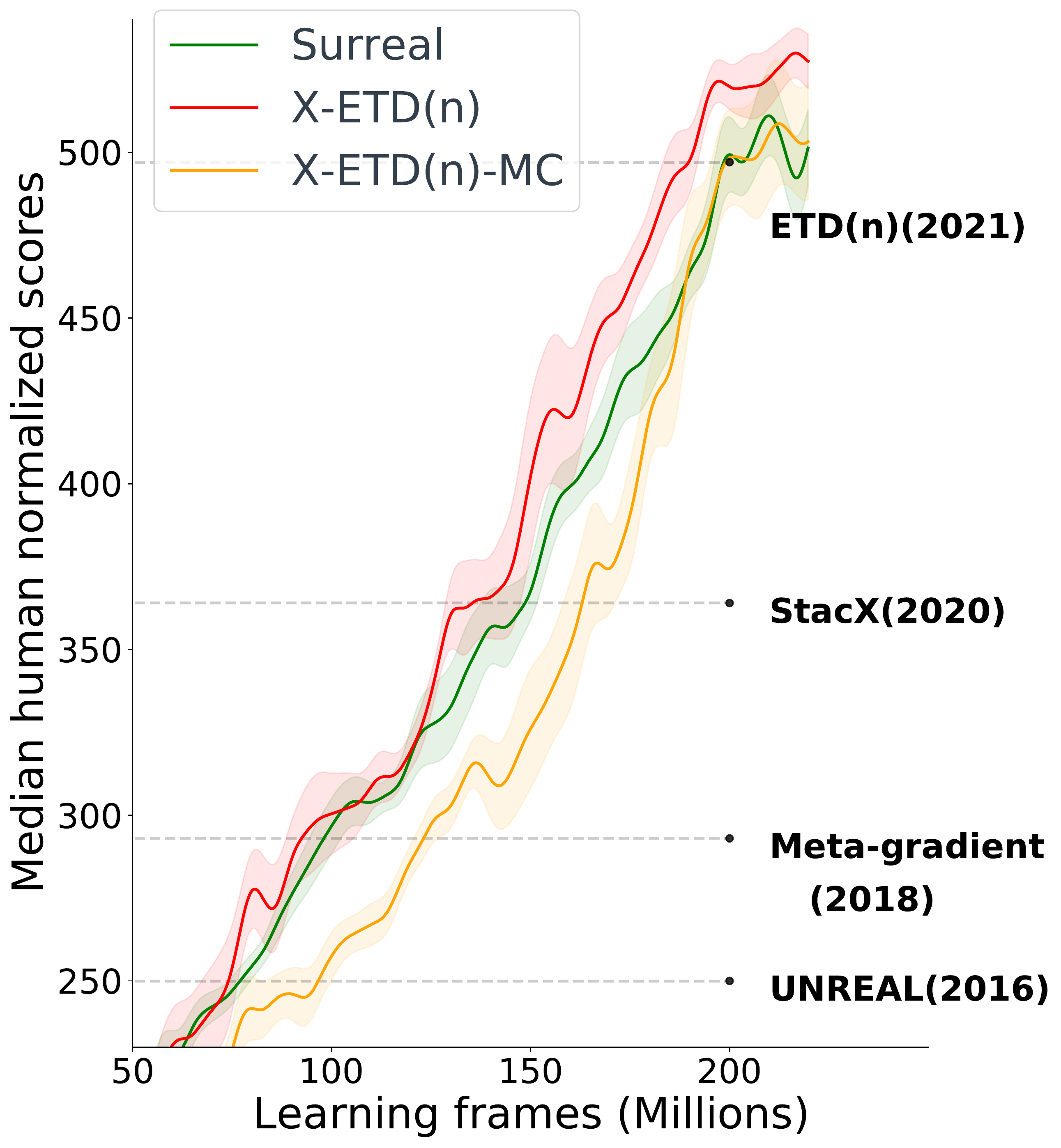}
\label{fig:median_scores}
}
\hspace{1cm}
\subfigure[Mean human normalized scores]{
\includegraphics[width=0.35\columnwidth]{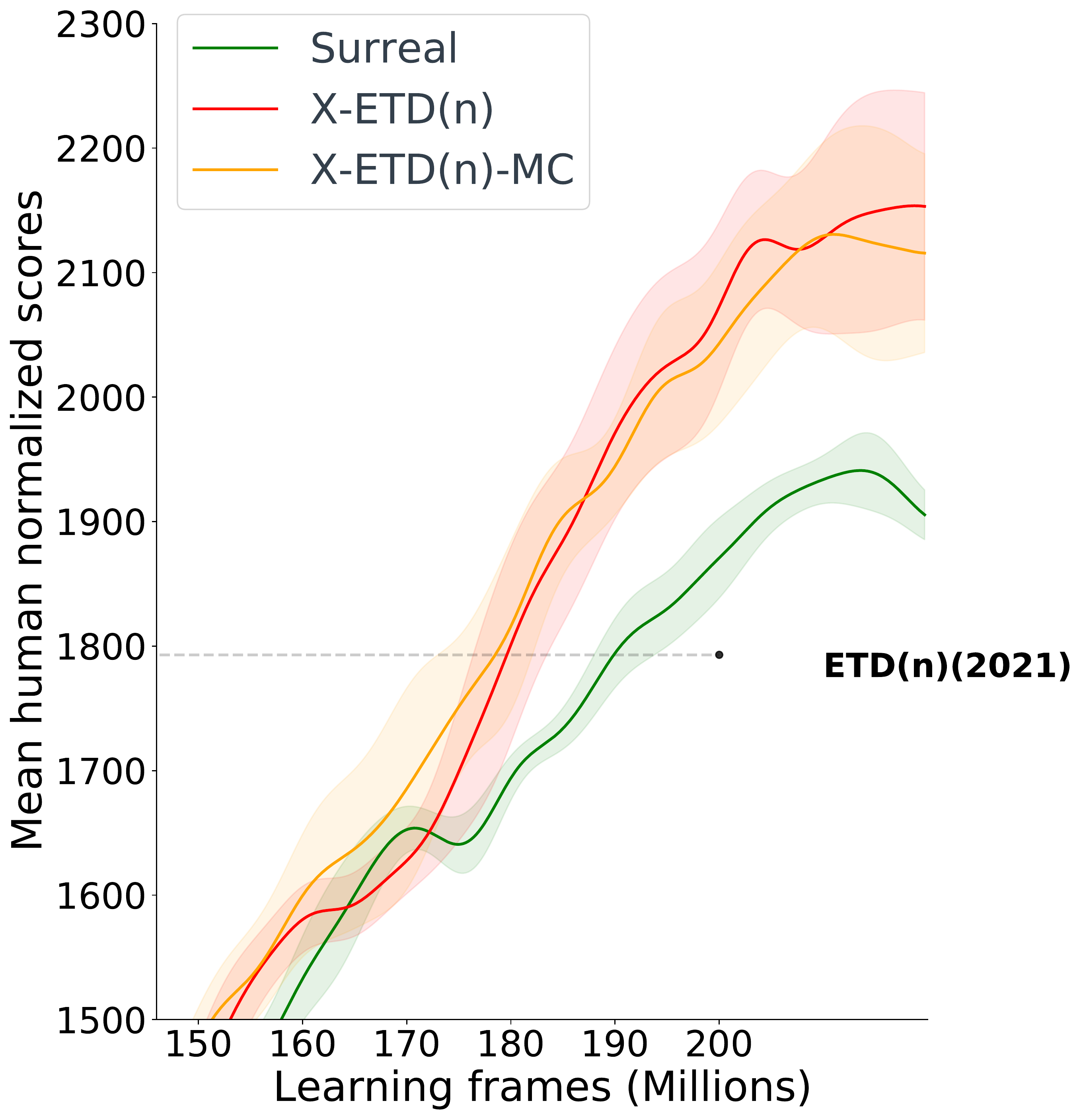}
\label{fig:mean_scores}
}
\caption{Training curves of \subref{fig:median_scores} median and \subref{fig:mean_scores} mean human normalized scores on 57 Atari games, with standard deviations (shaded area) across 3 seeds.}
\label{fig:training_curves}
\end{figure}

Table~\ref{tab:atari_results}, shows that the baseline Surreal agent with 50\% replay achieved a mean score of 1879, a median score of 525; X-ETD($n$) using the same amount of replay data achieved a mean score of 2122, a median score of 503; while X-ETD($n$)-MC obtained mean 2090 and median 537. The median scores are human-normalized and averaged across 3 seeds, then averaged over 200-250M frames evaluation window. On a per-game level, both X-ETD($n$) and X-ETD($n$)-MC outperformed Surreal on 97 out of 57x3=171 games with a p-value of 0.046. Fig.~\ref{fig:barplot} shows per-game improvements over Surreal. The improvements are stable across two emphatic variants on the majority of games. In the games where learned emphasis hurt performance, we observed two scenarios: 1) the predicted emphasis collapsed to 1, e.g. \texttt{assault}, \texttt{road\_runner}, 2) the predicted emphasis runs wild to huge values, e.g. \texttt{yars\_revenge}, \texttt{crazy\_climber}, \texttt{centipede}. The huge negative predictions are especially detrimental as the gradient directions are flipped.

In Fig.~\ref{fig:training_curves} shows mean and median training curves, with standard deviations across 3 seeds for each learning frame and then smoothed {\it via} a standard 1-D Gaussian filter ($\sigma=10$) for clarity. Adding auxiliary Monte Carlo loss is a double-edged sword, while improving stability of the time-reversed TD learning, it also brings in higher variance from the Monte Carlo emphasis. For this reason X-ETD($n$)-MC exhibited more variance than X-ETD($n$) during training, however, it also lead to more stability in score improvement across games, especially mitigating losses where learned emphasis failed to help (see Fig.~\ref{fig:barplot}).

What {\model} approximates is essentially similar to the density ratio between the state distributions of the target and behavior policies in that they both share a backward Bellman equation \citep{liu2018breaking,zhang2020provably}.
Learning density ratios is an active research area but
past works usually only tested on benchmarks with low-dimensional observations (e.g. MuJoCo \citep{todorov2012mujoco}, \citep{liu2018breaking,nachum2019dualdice,zhang2020gradientdice,uehara2020minimax,yang2020off}).
In this work,
we demonstrate that performance improvement is entirely possible when applying learned emphasis to challenging Atari games using high-dimensional image observations.

\section{Related Work}

The idea of learning expected emphatic traces as a function of state has been explored before on the canonical followon trace for backward view TD($\lambda$), to improve trackability of the critic in off-policy actor-critic algorithms \citep{zhang2020provably}. However in this work we focus on the $n$-step trace from \citet{jiang2021} in the forward view, to improve data efficiency in deep RL. Our proposed stabilization techniques, to facilitate at-scale learning, differ from \citet{zhang2020provably}. Though \citet{zhang2020provably} also use a learned trace to reweight 1-step off-policy TD in the GEM-ETD algorithm, theoretical analyses were not provided. In contrast, we provide a thorough theoretical analysis for X-ETD($n$). Finally, we demonstrate the effectiveness of our methods in challenging Atari domains, while \citet{zhang2020provably} experiment with only small diagnostic environments.

The idea of bootstrapping in the reverse direction has also been explored by 
\citet{wang2007dual,wang2008stable,hallak2017consistent,gelada2019off} in learning density ratios 
and by \citet{zhang2020learning} in learning reverse general value functions to represent retrospective knowledge. 
Besides learning a \emph{scalar} followon trace, 
\citet{van2020expected} learn a \emph{vector} eligibility trace \citep{Sutton:1988}, 
which,
together with \citet{satija2020constrained},
inspired our use of an auxiliary Monte Carlo loss.

\section{Conclusion}
\label{sec:conclusion}
In this paper, we propose a simple time-reversed TD learning algorithm for learning expected emphases that is applicable to non-sequential i.i.d. data. We proved that under certain conditions the resulting algorithm {\model} has low variance, is stable and convergence to a reasonable fixed point. Furthermore, it improved off-policy learning results upon well-established baselines on Atari 2600 games, demonstrating its generality and wide applicability. In future works, we would like to study {\model} in more diverse off-policy learning settings using different data sources.

\newpage
\bibliography{paper}
\bibliographystyle{apalike}

\appendix
\newpage
\vbox{%
\hsize\textwidth
\linewidth\hsize
\vskip 0.1in
\hrule height 4pt
\vskip 0.25in
\vskip -\parskip%
\centering
{\LARGE\bf Supplementary Material\par}
\vskip 0.29in
\vskip -\parskip
\hrule height 1pt
\vskip 0.09in%
\vskip 0.3in}

\title{Supplementary Material}
\section{Time-reversed TD}
\paragraph{Instability} 

The asymptotic update matrix of~\eqref{eq:backward_td} is
\begin{align}
\A &\doteq \lim_{k\rightarrow\infty} \E[\A_k] = \lim_{k\rightarrow\infty}  \E \left[\phi(S^k_n)\left[\phi(S^k_n)-\left(\prod_{t=1}^{n} \gamma^k_t\rho^k_{t-1}\right) \phi(S^k_0) \right]^\top \right] \\
&= \sum_s d_{\mu}(s) \E \left[\phi(S^k_n\left[\phi(S^k_n)-\left(\prod_{t=1}^{n} \gamma^k_t\rho^k_{t-1}\right) \phi(S^k_0) \right]^\top \mid S^k_n=s \right]\\
&= \bPhi^\top \D_{\mu} (\I- \D_{\mu}^{-1} (\bGamma \P^T_{\pi})^{n} \D_{\mu}) \bPhi,\\
&= \bPhi^T (\I- (\bGamma \P^T_{\pi})^{n}) \D_{\mu} \bPhi.
\end{align}
Notice that $\A$ is not necessarily p.d.. It is the matrix transpose of the steady state $n$-steps TD update matrix $\bPhi^\top \D_{\mu} (\I- (\P_{\pi}\bGamma)^{n}) \bPhi$. Thus the time-reversed TD is not always stable.

We now state a Lemma rephrased from \citet{sutton2016emphatic},
which will be repeatedly used in this paper.
\begin{lemma}
\label{lem sutton}
\citep{sutton2016emphatic}
Let $\X$ be a matrix with full column rank, $\D$ be a diagonal matrix with strictly positive diagonal entries,
$\P$ be a substochastic matrix, the row vector $\textbf{1}^\top \D(\I - \P)$ be elementwise strictly positive, 
then 
$\X^\top \D(\I - \P) \X$ is p.d..
\end{lemma}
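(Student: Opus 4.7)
My plan is to reduce the statement to the intrinsic claim that $\D(\I-\P)$ has strictly positive-definite symmetric part, i.e.\ that $\y^\top \D(\I-\P)\y > 0$ for every nonzero $\y \in \R^{|\calS|}$. Once this inner claim is established, the desired conclusion follows immediately from the full column rank of $\X$: any nonzero $\y$ gives $\X \y \neq 0$, so $\y^\top \X^\top \D(\I-\P) \X \y = (\X\y)^\top \D(\I-\P)(\X\y) > 0$, which is exactly positive definiteness of $\X^\top \D(\I-\P) \X$.

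To handle the quadratic form, I would symmetrise and then bound the off-diagonal contribution. Writing $2\y^\top \D(\I-\P)\y = 2\y^\top \D \y - 2\y^\top \D \P \y$ and applying AM--GM via $2 y_i y_j \leq y_i^2 + y_j^2$ to the double sum $\sum_{i,j} d_i P_{ij} y_i y_j$ (which is legal because the weights $d_i P_{ij}$ are nonnegative), the cross term splits into a row-sum piece governed by $\P\mathbf{1}$ and a column-weighted piece governed by $\mathbf{1}^\top \D \P$. Collecting terms coefficient-by-coefficient in $y_i^2$ yields the lower bound
\begin{align}
2\y^\top \D(\I-\P)\y \;\geq\; \sum_i y_i^2 \, d_i \, ((\I-\P)\mathbf{1})_i \;+\; \sum_i y_i^2 \, (\mathbf{1}^\top \D (\I-\P))_i .
\end{align}

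The first sum is nonnegative: substochasticity of $\P$ gives $\P \mathbf{1} \leq \mathbf{1}$, so $(\I-\P)\mathbf{1} \geq 0$ componentwise, and $d_i > 0$ by hypothesis. The second sum is where the hypothesis of the lemma directly enters: by assumption every entry of the row vector $\mathbf{1}^\top \D(\I-\P)$ is strictly positive, so as soon as some $y_i \neq 0$ (which is the case for any $\y \neq 0$) the second sum is strictly positive. This gives $\y^\top \D(\I-\P)\y > 0$ and closes the argument.

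The main delicacy is the AM--GM bookkeeping. I need the two halves of the cross-term bound to collapse \emph{exactly} into the row-sum piece $(\I - \P)\mathbf{1}$ and the column-weighted piece $\mathbf{1}^\top \D(\I-\P)$, so that substochasticity neatly disposes of one half while the lemma's hypothesis disposes of the other; any asymmetric accounting would either lose the strict inequality or require extra structural assumptions. Aside from this careful splitting, every step is routine linear algebra.
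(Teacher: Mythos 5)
Your proof is correct and is essentially the paper's argument: both reduce to showing the symmetric part of $\D(\I-\P)$ is positive definite by splitting its row sums into the piece $\D(\I-\P)\mathbf{1}$ (nonnegative by substochasticity) and the piece $(\mathbf{1}^\top\D(\I-\P))^\top$ (strictly positive by hypothesis). The only difference is cosmetic — where the paper cites a strict-diagonal-dominance criterion for positive definiteness, you unpack that criterion into an explicit AM--GM bound on the cross terms, which makes the argument self-contained.
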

\begin{proof}
We first show that
\begin{align}
    \Y \doteq \D(\I - \P) + (\D(\I - \P))^\top
\end{align}
is p.d..
Since $\Y$ is symmetric,
Corollary in page 23 of \citet{varga1962iterative} states that $\Y$ is p.d. if $\Y$ is strcitly diagonally dominant,
i.e.,
if for any $i$,
\begin{align}
\label{eq diagonally dominant}
    |\Y(i, i)| > \sum_{j \neq i} |\Y(i, j)|.
\end{align}
Note that the diagonal entries of $\Y$ are nonnegative and the off-diagnoal entries of $\Y$ are nonpositive.
Consequently, \eqref{eq diagonally dominant} is equivalent to $(\Y \textbf{1})(i) > 0$. 
We have
\begin{align}
    \Y \textbf{1} = \D(\I - \P) \textbf{1} + (\textbf{1}^\top \D(\I - \P))^\top.
\end{align}
Since $\P$ is a substochastic matrix,
$((\I - \P) \textbf{1})(i) \geq 0$ holds for any $i$,
it is then easy to see $\Y \textbf{1}(i) > 0$.
Consequently, $\Y$ is p.d..
By \citet{Sutton:1988},
$\D(\I - \P)$ is p.d. as well,
implying that $\X \D(\I - \P) \X$ is p.d..
\end{proof}

\paragraph{Proof of Proposition~\ref{prop trace clip}}
\begin{proof}
It suffices,
according to Lemma~\ref{lem sutton},
to show that the row vector 
\begin{align}
    \k^\top \doteq \d_\mu^\top - \d_\mu^\top (\P_{\bar \rho})^{n}
\end{align}
is strictly element-wise positive.
Let
\begin{align}
    u_{\bar \rho} &\doteq \max_{s, a} \min(\rho(a | s), \bar \rho), \\
    \gamma &\doteq \max_s \gamma(s).
\end{align}
we have
\begin{align}
   &\P_{\bar \rho}(s, s') \leq \sum_a \mu(a | s) p(s' | s, a) u_{\bar \rho} \gamma(s') = \P_\mu(s, s') u_{\bar \rho} \gamma(s') \leq \P_\mu(s, s') u_{\bar \rho} \gamma,\\
\implies &\sum_j \P_{\bar \rho}(s, j) \P_{\bar \rho}(j, s') \leq \sum_j \P_\mu(s, j) \P_\mu(j, s') u_{\bar \rho}^2 \gamma^2\\
\implies &\P_{\bar \rho}^2(s, s') \leq \P_\mu^2(s, s') u_{\bar \rho}^2 \gamma^2\\
\implies &\P_{\bar \rho}^{n}(s, s') \leq \P_\mu^{n}(s, s') u_{\bar \rho}^{n} \gamma^{n}\\
\end{align}
So
\begin{align}
    k(s) &= d_\mu(s) - \sum_{\bar s} d_\mu(\bar s) \P_{\bar \rho}^{n}(\bar s, s) \\
    & \geq d_\mu(s) - \sum_{\bar s} d_\mu(\bar s) \P_\mu^{n}(\bar s, s) u_{\bar \rho}^{n} \gamma^{n}\\
    &= d_\mu(s) (1 - u_{\bar \rho}^{n}\gamma^{n})
\end{align}
 Thus
\begin{align}
    u_{\bar \rho} < \frac{1}{\gamma}
\end{align}
is a sufficient condition for the key matrix to be p.d.,
which completes the proof.
\end{proof}

\paragraph{Proof of Proposition~\ref{prop trace monte carlo}}
\begin{proof}
The asymptotic update matrix of~\eqref{eq sequential_iid_update} is
\begin{align}
\A \doteq \bPhi^\top ((\I+\beta) - (\bGamma \P^\top_{\pi})^{n}) \D_\mu\bPhi.
\end{align}
For this matrix to be p.d.,
it suffices, according to Lemma~\ref{lem sutton},
to have the row vector
\begin{align}
    \mathbf{1}^\top \D_\mu ((1 + \beta) \I - (\P_\pi \bGamma)^{n})
\end{align}
to be strictly element-wise positive.
Clearly,
one sufficient condition is that
\begin{align}
    1 + \beta > \max_s \frac{\left(\d_\mu^\top (\P_\pi \bGamma)^{n}\right)(s)}{d_\mu(s)},
\end{align}
which completes the proof.
\end{proof}

\section{X-ETD($n$)}

\paragraph{Proof of Proposition~\ref{prop xnetd reduced var}}
\begin{proof}
It is easy to see
\begin{align}
    \V(f_\theta(S_t) \Delta_t^\w | S_t = s) = f_\theta^2(s) \V(\Delta_t^\w | S_t = s).
\end{align}
By the rule of the variance of the product of (conditionally) independent random variables,
we have
\begin{align}
    &\V(F_t \Delta_t^\w | S_t = s) \\ =& \V(F_t | S_t = s) \V(\Delta_t^\w | S_t = s) + \V(F_t | S_t = s) \E^2[\Delta_t^\w | S_t = s] + \V(\Delta_t^\w | S_t = s) \E^2[F_t | S_t = s] \\
    \geq& \V(F_t | S_t = s) \V(\Delta_t^\w | S_t = s) + \V(\Delta_t^\w | S_t = s) \E^2[F_t | S_t = s].
\end{align}
Then it is easy to see that one sufficient condition for 
\begin{align}
\V(f_\theta(S_t) \Delta_t^\w | S_t = s) \leq \V(F_t \Delta_t^\w | S_t = s)
\end{align}
to hold is that
\begin{align}
    f_\theta^2(s) \leq \V(F_t | S_t = s) + \E^2[F_t | S_t = s].
\end{align}
For any state $s$,
simple algebraic manipulation shows that
\begin{align}
f_\theta^2(s) - f^2(s) \leq \epsilon_s(\epsilon_s + 2f(s)),
\end{align}
i.e., for any $s$ and $t$,
\begin{align}
&\epsilon_s(\epsilon_s + 2f(s)) < \V(F_t | S_t = s) \\
\implies & f_\theta^2(s) - f^2(s) < \V(F_t | S_t = s).
\end{align}
Or equivalently,
\begin{align}
&\epsilon_s(\epsilon_s + 2f(s)) < \V(F_t | S_t = s) \\
\implies &f_\theta^2(s) = \V(F_t | S_t = s) + f^2(s) - \tau
\end{align}
for some $\tau > 0$.
Since 
\begin{align}
    \lim_{t\to \infty} \E^2[F_t | S_t = s] = f^2(s),
\end{align}
for any $s$, there exists a $\bar t$ such that for all $t > \bar t$,
\begin{align}
|\E^2[F_t | S_t = s] - f^2(s)| < \tau.
\end{align}
Consequently,
for any $t > \bar t$,
\begin{align}
&\epsilon_s(\epsilon_s + 2f(s)) < \V(F_t | S_t = s) \\
\implies &f_\theta^2(s) < \V(F_t | S_t = s) + \E^2[F_t | S_t = s],
\end{align}
which completes the proof.
\end{proof}

\subsection{Stability}
\label{sec:stability}
After making the following assumption about the features, 
we show that the update of X-ETD($n$) \eqref{eq xetdn} is stable as long as the function approximation error is not too large. 
\begin{assumption}
\label{assu full rank}
(Features)
The feature matrix $\bPhi$ has full column rank.
\end{assumption}
\begin{lemma}
\label{lem pd}
(Stability)
Under Assumptions~\ref{assu ergodic} \&~\ref{assu full rank},
there exists a constant $\eta_0 > 0$ such that
\begin{align}
    \norm{\D_\mu^\epsilon} < \eta_0 \implies \A \, \text{is p.d.}
\end{align}
\end{lemma}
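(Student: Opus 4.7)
}
The plan is to split $\A$ into an ``exact-emphasis'' matrix $\A_0 \doteq \bPhi^\top \D_\mu^f (\I - (\P_\pi \bGamma)^n) \bPhi$ and a perturbation $\A_\epsilon \doteq \bPhi^\top \D_\mu^\epsilon (\I - (\P_\pi \bGamma)^n) \bPhi$, so that $\A = \A_0 + \A_\epsilon$. I would first establish that $\A_0$ is p.d., and then argue that p.d.-ness is preserved under a sufficiently small perturbation $\A_\epsilon$, via a Weyl-type inequality on the symmetric part. Since a (possibly non-symmetric) matrix $\A$ is p.d. in the sense used in the paper iff $\A + \A^\top$ is symmetric positive definite, I will always work with the symmetrised matrices.

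For the first step I would apply Lemma~\ref{lem sutton} with $\X \doteq \bPhi$, $\D \doteq \D_\mu^f$, and $\P \doteq (\P_\pi \bGamma)^n$. Assumption~\ref{assu full rank} supplies the required full column rank of $\bPhi$. Under Assumption~\ref{assu ergodic} we have $d_\mu(s) > 0$ for every state, and the definition $f(s) = \lim_{t\to\infty}\E_\mu[F_t\mid S_t = s]$ combined with the followon recursion $F_t = (\prod_{i=t-n}^{t-1}\rho_i\gamma_{i+1})F_{t-n}+1$ gives $f(s) \geq 1$, so $\D_\mu^f$ has strictly positive diagonal. The matrix $(\P_\pi \bGamma)^n$ is substochastic because $\bGamma \preceq \I$ entrywise and $\P_\pi$ is stochastic. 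The only non-routine piece is the row-vector condition of Lemma~\ref{lem sutton}: I need $\mathbf{1}^\top \D_\mu^f (\I - (\P_\pi \bGamma)^n)$ to be elementwise strictly positive. This is where the structure of the expected followon trace enters: taking expectations of the followon recursion under the stationary chain yields the fixed-point identity $(\d_\mu^f)^\top = \d_\mu^\top + (\d_\mu^f)^\top (\P_\pi \bGamma)^n$, so that
\begin{align}
(\d_\mu^f)^\top (\I - (\P_\pi \bGamma)^n) = \d_\mu^\top,
\end{align}
which is elementwise strictly positive by ergodicity. Lemma~\ref{lem sutton} then gives $\A_0$ p.d.

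For the second step, let $\sigma_0 \doteq \lambda_{\min}(\A_0 + \A_0^\top) > 0$ from what we just established. Submultiplicativity of the spectral norm gives
\begin{align}
\norm{\A_\epsilon + \A_\epsilon^\top} \leq 2\norm{\bPhi}^2 \norm{\I - (\P_\pi \bGamma)^n} \norm{\D_\mu^\epsilon}.
\end{align}
By Weyl's inequality,
\begin{align}
\lambda_{\min}(\A + \A^\top) \geq \sigma_0 - 2\norm{\bPhi}^2\norm{\I - (\P_\pi \bGamma)^n}\norm{\D_\mu^\epsilon},
\end{align}
so choosing $\eta_0 \doteq \sigma_0 / \left(2\norm{\bPhi}^2\norm{\I - (\P_\pi \bGamma)^n}\right)$ ensures $\A + \A^\top \succ 0$ whenever $\norm{\D_\mu^\epsilon} < \eta_0$, completing the proof.

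The main obstacle is the first step, specifically the identity $(\d_\mu^f)^\top(\I - (\P_\pi\bGamma)^n) = \d_\mu^\top$; everything else is a routine perturbation argument. Establishing this identity cleanly amounts to reversing the $n$-step followon recursion in expectation under $d_\mu$, and parallels the stability derivation for ETD($n$) in \citet{jiang2021}---the subtle point being to verify that the limiting conditional expectation $f(s)$ exists and that the usual emphatic fixed-point relation indeed transports to the $n$-step setting with the transition kernel $(\P_\pi \bGamma)^n$ rather than the one-step kernel $\P_\pi \bGamma$.
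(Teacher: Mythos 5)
Your proof is correct and follows essentially the same route as the paper: both treat $\A$ as a perturbation of the exact-emphasis matrix $\bPhi^\top \D_\mu^f(\I - (\P_\pi\bGamma)^{n})\bPhi$ and show that positive definiteness survives when $\norm{\D_\mu^\epsilon}$ is small enough, your Weyl-inequality bound on the symmetrised matrices being equivalent to the paper's compactness argument (minimising the quadratic form over the unit sphere). The only substantive difference is that you re-derive the positive definiteness of the unperturbed matrix from Lemma~\ref{lem sutton} together with the $n$-step followon fixed-point identity $(\d_\mu^f)^\top(\I - (\P_\pi\bGamma)^{n}) = \d_\mu^\top$, whereas the paper simply cites this fact from \citet{jiang2021}.
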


\begin{proof}
As shown by \citet{jiang2021},
$\D^{f}_{\mu} (\I - (\P_\pi \bGamma)^{n})$ is p.d.,
i.e.,
for any $\y$, $$g(\y) \doteq \y^\top \D^{f}_{\mu} (\I - (\P_\pi \bGamma)^{n}) \y > 0.$$
Since $g(\y)$ is a continuous function, 
it obtains its minimum value in the compact set $\mathcal{Y} \doteq \qty{\y: \norm{\y} = 1}$,
say, e.g., $\eta$,
i.e.,
\begin{align}
    g(\y) \geq \eta > 0
\end{align}
holds for any $\y \in \mathcal{Y}$.
In particular, for any $\y \in \R^K$,
\begin{align}
    g(\frac{\y}{\norm{\y}}) \geq \eta
\end{align}
i.e.,
\begin{align}
    \y^\top \D^{f}_{\mu} (\I - (\P_\pi \bGamma)^{n}) \y \geq \eta \norm{\y}^2.
\end{align}
Let
\begin{align}
    \eta_0 \doteq \frac{\eta}{\norm{\I - (\P_\pi \bGamma)^{n}}},
\end{align}
we have for any $\y$,
\begin{align}
    &\y^\top \D^{\theta}_{\mu} (\I - (\P_\pi \bGamma)^{n} ) \y \\
    =& \y^\top \D^{f}_{\mu} (\I - (\P_\pi \bGamma)^{n}) \y + \y^\top \D^{\varepsilon}_\mu (\I - (\P_\pi \bGamma)^{n}) \y \\
    \geq & \eta \norm{\y}^2 + \y^\top \D^{\varepsilon}_\mu (\I - (\P_\pi \bGamma)^{n}) \y  \\
    \geq & \eta \norm{\y}^2 - |\y^\top \D^{\varepsilon}_\mu (\I - (\P_\pi \bGamma)^{n}) \y| \\
    \geq & \eta \norm{\y}^2 - \norm{\y}^2 \norm{\D^{\varepsilon}_\mu}  \norm{\I - (\P_\pi \bGamma)^{n}} \\
    =& (\eta_0 - \norm{\D^{\varepsilon}_\mu})  \norm{\I - (\P_\pi \bGamma)^{n}} \norm{\y}^2,
\end{align}
i.e.,
when $\norm{\D_\mu^\epsilon} < \eta_0$ holds,
$\D^{f}_{\theta} (\I - (\P_\pi \bGamma)^{n} )$ is p.d.,
which, together with Assumption~\ref{assu full rank}, immediately implies that $\A$ is p.d..
\end{proof}
Following the definition of stability in \citet{sutton2016emphatic}, $\A$ being p.d. gives stable steady state updates.

\paragraph{Proof of Theorem~\ref{thm convergence x-netd vtrace}}
\begin{proof}
We first consider Eq.~\eqref{eq xetdn} in the sequential setting.

Let $Z_t \doteq (S_t, A_t, \dots, S_{t+n}, A_{t+n}, S_{t+n+1})$.
Assumption~\ref{assu ergodic} implies that the Markov chain $\qty{Z_t}$ is ergodic.
We use $d_z$ to denote its ergodic distribution.
For $z = (s_1, a_1, \dots, s_n, a_n, s_{n+1})$,
we define matrix-valued and vector-valued functions
\begin{align}
A(z) &\doteq f_\theta(s_1) \phi(s_1) \sum_{k=1}^n \left(\prod_{i=1}^{k-1} \gamma(s_{i+1}) \frac{\pi(a_i|s_i)}{\mu(a_i | s_i)}  \right) \frac{\pi(a_k|s_k)}{\mu(a_k | s_k)} (\phi(s_k) - \gamma(s_{k+1}) \phi(s_{k+1}))^\top, \\ 
b(z) &\doteq f_\theta(s_1) \sum_{k=1}^n \left(\prod_{i=1}^{k-1} \gamma(s_{i+1}) \frac{\pi(a_i|s_i)}{\mu(a_i | s_i)}  \right) \frac{\pi(a_k|s_k)}{\mu(a_k | s_k)} r(s_k, a_k) \phi(s_k),
\end{align}
which allows us to rewrite \eqref{eq xetdn} as
\begin{align}
    \w_{t+1} = \w_t + \alpha^\w_t (b(Z_t) - A(Z_t) \w_t).
\end{align}
For stochastic approximation algorithms like this,
Proposition 4.8 in \citet{bertsekas1995neuro} asserts that $\qty{w_t}$ convergs almost surely if the following five conditions are satisfied:

(a) The learning rates $\qty{\alpha^\w_t}$ are nonnegative, deterministic, and satisfy $\sum_t \alpha_t^w = \infty, \sum_t (\alpha_t^w)^2 < \infty$ \\
(b) $\qty{Z_t}$ is ergodic \\
(c) $\E_{d_z}[A(z)]$ is positive definite \\
(d) $\max_z \norm{A(z)} < \infty, \max_z \norm{b(z)} < \infty$ \\
(e) There exist scalars $c$ and $\tau_0$ with $\tau_0 \in [0, 1)$ such that 
\begin{align}
    \norm{\E[A(Z_t)] - \E_{d_z}[A(z)]} \leq c \tau_0^t, \norm{\E[b(Z_t)] - \E_{d_z}[b(z)]} \leq c \tau_0^t.
\end{align}

In our case, (a) is satisfied by Assumption~\ref{assu learning rate}.
(b) follows from Assumption~\ref{assu ergodic}.
Since $\E_{d_z}[A(z)] = \A$,
$\E_{d_z}[b(z)] = \b$,
(c) follows from Lemma~\ref{lem pd}.
(d) is obvious since we consider a finte state action MDP.
To verify (e), 
consider
\begin{align}
    \E[A(Z_t)] = \sum_z \Pr(Z_t = z) A(z).
\end{align}
So
\begin{align}
    \norm{\E[A(Z_t)] - \E_{d_z}[A(z)]} &= \norm{\sum_z \left(\Pr(Z_t = z) - d_z(z) \right) A(z)} \\
    &\leq \sum_z |\Pr(Z_t = z) - d_z(z)| \norm{A(z)} \\
    &\leq \left(\sum_z |\Pr(Z_t = z) - d_z(z)| \right) \max_z \norm{A(z)}
\end{align}
According to Theorem 4.9 in \citet{levin2017markov},
under Assumption~\ref{assu ergodic},
there exists scalar $c_0$ and $\tau_0$ with $\tau_0 \in [0, 1)$ such that
\begin{align}
    \sum_z |\Pr(Z_t = z) - d_z(z)| \leq c_0 \tau_0^t.
\end{align}
Consequently, 
the first condition in (e) is verified;
the second condition in (e) can be verified in the same way.

With all the five conditions satisfied,
Proposition 4.8 in \citet{bertsekas1995neuro} asserts that
\begin{align}
\lim_{t \to \infty} \w_t = \A^{-1}\b \quad a.s..
\end{align}

The ergodic distribution $d_z$ and probability of $Z_t=z$ remain the same in the i.i.d. setting as long as $S_t \sim d_\mu$. Therefore we reach the same conclusion in the i.i.d. setting.
\end{proof}

\paragraph{Proof of Proposition~\ref{prop fixed point performance}}
\begin{proof}
For the sake of readability,
in this proof,
we define
\begin{align}
    \L \doteq \I - (\P_\pi \bGamma)^{n}
\end{align}
as shorthand.
We first bound the distance between $\w_\infty$ and the unbiased fixed point
\begin{align}
    \w_* \doteq (\bPhi^\top \D_\mu^f \L \bPhi)^{-1} \bPhi^\top \D_\mu^f \r_\pi^n.
\end{align}
We have
\begin{align}
\label{eq fixed point perf part1}
    &\norm{\w_\infty - \w_*} \\
    \leq & \norm{\left( (\bPhi^\top \D_\mu^\theta \L \bPhi)^{-1} -(\bPhi^\top \D_\mu^f \L \bPhi)^{-1} \right)\bPhi^\top \D_\mu^\theta  \r_\pi^n} + \norm{(\bPhi^\top \D_\mu^f \L \bPhi)^{-1} \bPhi^\top (\D_\mu^\theta - \D_\mu^f) \r_\pi^n} \\
    \leq & \norm{(\bPhi^\top \D_\mu^\theta \L \bPhi)^{-1}}\norm{(\bPhi^\top \D_\mu^f \L \bPhi)^{-1}} \norm{\bPhi^\top \D_\mu^\epsilon  \L \bPhi} \norm{\bPhi^\top (\D_\mu^f + \D_\mu^\epsilon) \r_\pi^n} + \norm{(\bPhi^\top \D_\mu^f \L \bPhi)^{-1} \bPhi^\top \D_\mu^\epsilon \r_\pi^n}
    \intertext{\hfill (Using $\norm{\X^{-1} - \Y^{-1}} \leq \norm{\X^{-1}}\norm{\Y^{-1}}\norm{\X - \Y}$)}
\end{align}
We now bound $(\bPhi^\top \D_\mu^\theta \L\bPhi)^{-1}$,
According to Corollary 8.6.2 of \citet{golub2013matrix},
for any two matrices $\X$ and $\Y$,
\begin{align}
|\sigma_{min}(\X + \Y) - \sigma_{min}(\X)| \leq \norm{\Y},
\end{align}
where $\sigma_{min}(\cdot)$ indicates the smallest singular value.
If $\X$ is nonsingular and we select some $c_0 \in (0, \sigma_{min}(\X))$,
we get
\begin{align}
&\norm{\Y} \leq \sigma_{min}(\X) - c_0 \\
\implies &\norm{(\X+\Y)^{-1}} = \frac{1}{\sigma_{min}(\X+\Y)} \leq \frac{1}{\sigma_{min}(\X) - \norm{\Y}} \leq c_0^{-1}.
\end{align}
In our case, 
we consider $\bPhi^\top \D_\mu^f \L \bPhi$ as $\X$ and $\bPhi^\top \D_\mu^\epsilon \L \bPhi$ as $\Y$,
we get
\begin{align}
\label{eq fixed point perf part2}
&\norm{\bPhi^\top \D_\mu^\epsilon \L \bPhi} \leq \sigma_{min}(\bPhi^\top \D_\mu^f \L \bPhi) - c_0 \\
\implies & \norm{(\bPhi^\top \D_\mu^\theta \L \bPhi)^{-1}} \leq c_0^{-1}.
\end{align}
Here the nonsingularity of $\bPhi^\top \D_\mu^f \L \bPhi$ is proved in \citet{jiang2021}.
Combining the spectral radius bounds of $\bPhi^\top \D_\mu^f \L \bPhi$ with \eqref{eq fixed point perf part1} and \eqref{eq fixed point perf part2},
it is easy to see that there exists a constant $c_1 > 0$ such that
\begin{align}
    \norm{\w_\infty - \w_*} \leq c_1 \norm{\D_\mu^\epsilon}.
\end{align}
Consequently,
\begin{align}
   \norm{\bPhi \w_\infty - \v_\pi} \leq \norm{\bPhi \w_\infty - \bPhi \w_*} + \norm{\bPhi \w_* - \v_\pi} \leq \norm{\bPhi}c_1 \norm{\D_\mu^\epsilon} + \norm{\bPhi \w_* - \v_\pi}
\end{align}
According to Lemma 1 and Theorem 1 in \citet{white2017unifying},
there exists a constant $c_2 > 0$ such that
\begin{align}
\norm{\bPhi \w_* - \v_\pi}_{\D_\mu^f} \leq c_2 \norm{\Pi_{\D_\mu^f} \v_\pi - \v_\pi}_{\D_\mu^f}.
\end{align}
Using the equivalence between norms,
we have for some constant $c_3 > 0$,
\begin{align}
\norm{\bPhi \w_* - \v_\pi} \leq c_3 \norm{\bPhi \w_* - \v_\pi}_{\D_\mu^f},
\end{align}
which completes the proof.
In particular, one possible $\eta_1$ is 
\begin{align}
\frac{\sigma_{min}(\bPhi^\top \D_\mu^f \L \bPhi) - c_0}{\norm{\bPhi^\top}\norm{\L \bPhi}}.
\end{align}
\end{proof}

\section{IMPALA}
\paragraph{V-trace} Since the product of IS ratios in off-policy TD($n$) can lead to high variances, \emph{V-trace} \citep{espeholt2018} clips the IS ratios to reduce variance.
V-trace updates $\w$ iteratively as 
\begin{align}
\label{eq:vtrace_orig}
\w_{t+1} \doteq \w_t + \alpha \sum_{k=t}^{t+n-1} \left(\prod_{i=t}^{k-1} \bar c_i \gamma_{i+1} \right) \, \bar \rho_k \delta_k(\w_t) \boldsymbol{\phi}_t,
\end{align}
where $\bar{\rho}_t \doteq \min(\bar{\rho}, \rho_t)$, $\bar{c}_t \doteq \min(\bar{c}, \rho_t)$. In practice, the clipping thresholds $\bar{c}$ and $\bar{\rho}$ are often equal, so that $\bar{c}_t \equiv \bar{\rho}_t$.
Clipping due to $\bar{c}$ is equivalent to bootstrapping more, as discussed by \citet{Mahmood:2017AB}.
It is straightforward to apply both off-policy $n$-step TD and V-trace in the i.i.d. setting.

\paragraph{Control} The V-trace update is most often used in actor-critic systems, such as Impala \citep{espeholt2018}. Consider the current policy $\pi_\vartheta$ parametrized by an additional set of policy parameters $\vartheta$. Following the derivation of policy gradient in \citet{espeholt2018}, we update the \emph{actor}, parameters $\vartheta$ in the following direction:
\begin{align}
\label{eq:vtrace_actor}
    \bar{\rho}_t (R_{t+1} + \gamma_{t+1} v_{t+1} - v_{\w}(S_t)) \nabla_{\vartheta}\log \pi_{\vartheta}(A_t|S_t) \,,
\end{align}
where $v_{t+1}$ is the V-trace target,
\begin{align}
\label{eq:vtrace_target}
   v_{t+1} \doteq v_{\w}(S_{t+1}) + \hskip-0.1cm \sum_{k=t+1}^{t+n} \left(\prod_{i=t+1}^{k-1} \bar{c}_i \gamma_{i+1}\right) \hskip0.1cm \bar{\rho}_k \delta_k(\w),
\end{align}
and the \emph{critic} $v_\w$ is learned using the aforementioned policy evaluation V-trace update. This learning algorithm has been very successful \citep[e.g.,][]{espeholt2018,hessel2019multi} in mild off-policy learning settings.

\citet{jiang2021} extends ETD($n$) trace to the control setting through re-weighting the policy gradient in the learning updates as well as the value gradient by the ETD($n$) trace. 
\section{Hyperparameters}
\label{sec:surreal_hyperparams}

\textbf{Architectures.}

\begin{table}[h!]
\caption{Network architecture}
\begin{center}
\begin{tabular}{|l|l|l|l|}
    \hline
    Parameter & \\
    \hline 
    convolutions in block & (2, 2, 2, 2) \\
    channels & (64, 128, 128, 64) \\
    kernel sizes & (3, 3, 3, 3) \\
    kernel strides & (1, 1, 1, 1)  \\
    pool sizes & (3, 3, 3, 3)  \\
    pool strides & (2, 2, 2, 2)  \\
    frame stacking & 4 \\
    head hiddens & 512  \\
    activation & Relu \\
    trace hiddens & 256 \\
    \hline
\end{tabular}
\end{center}
\label{table:hyperparameters2}
\end{table}

Our DNN architecture is composed of a shared torso, which then splits to different heads. We have a head for the policy, a head for the value function and a head for the emphatic trace (multiplied by the number of auxiliary tasks). The value and policy heads are two-layered MLPs with 512 hidden units, where the output dimension corresponds to 1 for the value function head. For the policy head, we have $|A|$ outputs that correspond to softmax logits. The trace head is a two-layered MLP with 256 hidden units, and the output dimension 1. We use ReLU activations on the outputs of all the layers besides the last layer. For the policy head, we apply a softmax layer and use the entropy of this softmax distribution as a regularizer.  

The \textbf{torso} of the network is composed from residual blocks. In each block there is a convolution layer, with stride, kernel size, channels specified in Table \ref{table:hyperparameters2}, with an optional pooling layer following it. The convolution layer is followed by n - layers of convolutions (specified by blocks), with a skip contention. The output of these layers is of the same size of the input so they can be summed. The block convolutions have kernel size $3,$ stride $1$.  

\textbf{Hyperparameters.}
Table \ref{table:hyperparameters} lists all the hyperparameters used by our agent. Most of the hyperparameters follow the reported parameters from the IMPALA paper. For completeness, we list all of the exact values that we used below.  
\begin{table}[h!]
\caption{Hyperparameters table}
\begin{center}
\begin{tabular}{|l|l|l|l|}
    \hline
    Parameter & Value  \\
    \hline 
    total environment steps & 200e6 \\
    optimizer & RMSPROP \\
    start learning rate & $2 \cdot 10^{-4}$ \\
    end learning rate & 0  \\
    trace weight w & 1\\
    decay & 0.99 \\
    eps & 0.1 \\
    importance sampling clip & 1 \\
    gradient norm clip & $1$\\
    trajectory $n$ & $10$\\
    online batch size & 6 \\
    replay batch size & 6 \\
    replay buffer size & $10^4$\\
    sampling priority &  uniform\\
    discount  $\gamma$ (main) &  $\sigma(4.6)\approx.99$ \\
    discount  $\gamma^1$ ($1^{st}$ auxiliary) &  $\sigma(4.4)\approx.988$ \\
    discount  $\gamma^2$ ($2^{nd}$ auxiliary) &  $\sigma(4.2)\approx.985$ \\
    \hline
\end{tabular}
\end{center}
\label{table:hyperparameters}
\end{table}

\newpage
\section{Additional Results}
\label{sec:add_results}

\begin{figure}[h!]
\centering
\subfigure[MDP]{
\begin{tikzpicture}[dgraph]
\node[ellip] (s1) at (0.,1) {$2\theta_1$\\$\,+$\\$\,\theta_8$};
\node[ellip] (s2) at (1.1,1) {$2\theta_2$\\$\,+$\\$\,\theta_8$};
\node[ellip] (s3) at (2.2,1) {$2\theta_3$\\$\,+$\\$\,\theta_8$};
\node[ellip] (s4) at (3.3,1) {$2\theta_4$\\$\,+$\\$\,\theta_8$};
\node[ellip] (s5) at (4.4,1) {$2\theta_5$\\$\,+$\\$\,\theta_8$};
\node[ellip] (s6) at (5.5,1) {$2\theta_6$\\$\,+$\\$\,\theta_8$};
\node[ellip, text width=1.5cm] (s7) at (2.8,-2) {$\,\,\theta_7+2\theta_8$};
\node[input] (ss) at (2.8, 2.2) {top level};
\node[system,fit=(ss) (s1) (s2) (s3) (s4) (s5) (s6)] (t0){};
\draw[](t0.-120)--(s7.120);
\draw[dashed](s7.60)--(t0.-60);
\draw[](s7.10)to[out=40, in=-40,looseness=8] (s7.-10);
\end{tikzpicture}
\label{fig:baird_mdp}
}
\hspace{.2cm}
\subfigure[Comparison]{
\includegraphics[width=0.45\columnwidth]{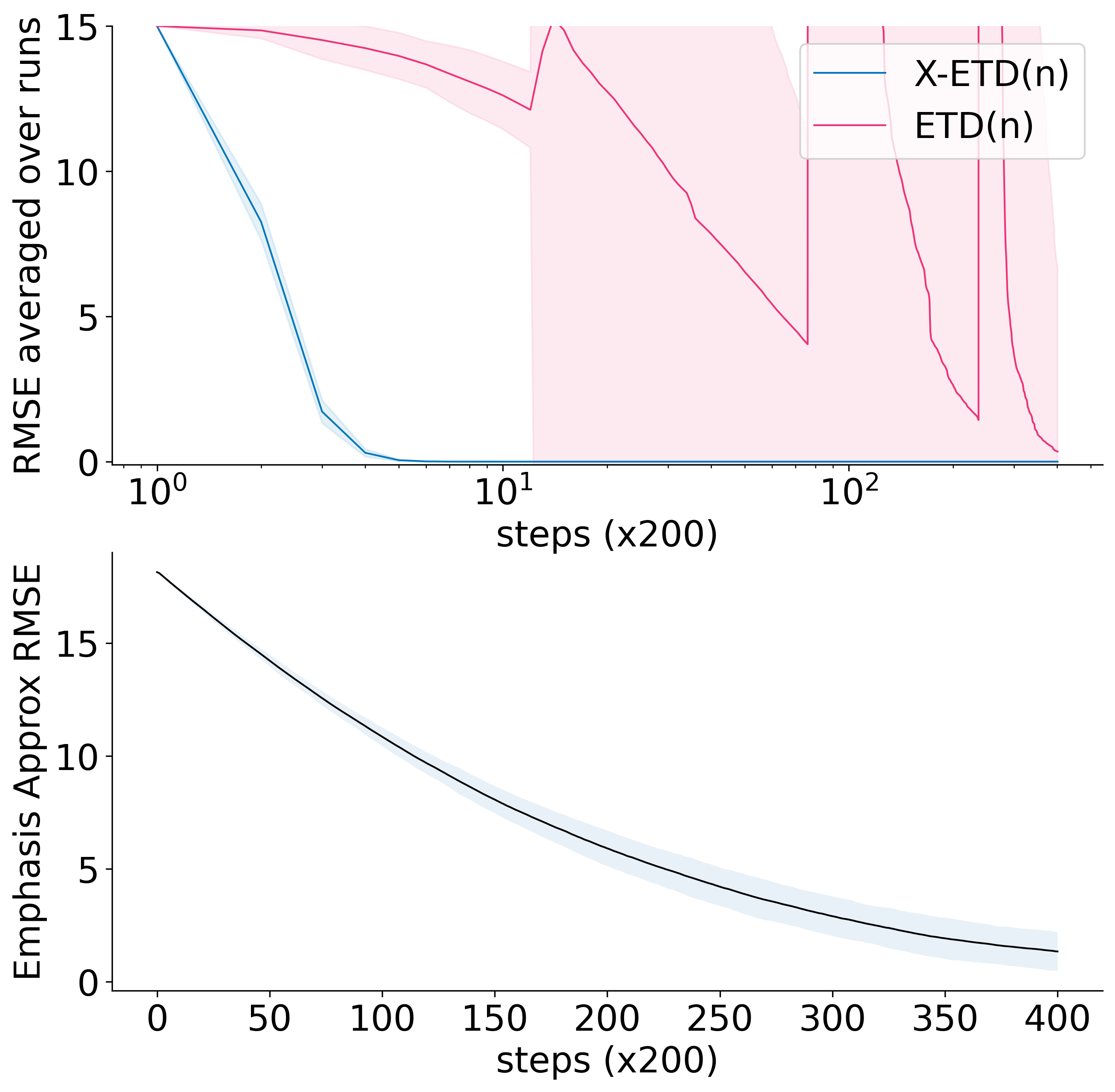}
\label{fig:baird_exp_appendix}
}
\caption{MDP illustration.
\subref{fig:baird_mdp}
Modified Baird's counterexample. Red lines indicate the target policy $\pi($solid$|\cdot)=0.3$, $\pi($dashed$|\cdot)=0.7$. Blue lines indicate the behavior policy $\mu($solid$|\cdot)=6/7, \mu($dashed$|\cdot)=1/7$. When action is ``dashed'', the agent goes to a random state on the top level. When action is ``solid'', the agent goes to the bottom state. 
\subref{fig:baird_exp_appendix}
RMSE in the value estimates and RMSE in expected trace approximation over time in a modified version of Baird's counterexample. We report the performance of each algorithm using the best performing hyperparameters (according to RMSE of the value function) from an extensive sweep (described in text). Shaded regions indicate two standard deviations of the mean performance computed from 100 independent runs.
}
\label{fig:baird_illustration}
\end{figure}

\begin{figure}[htb!]
\centering
\includegraphics[width=0.6\columnwidth]{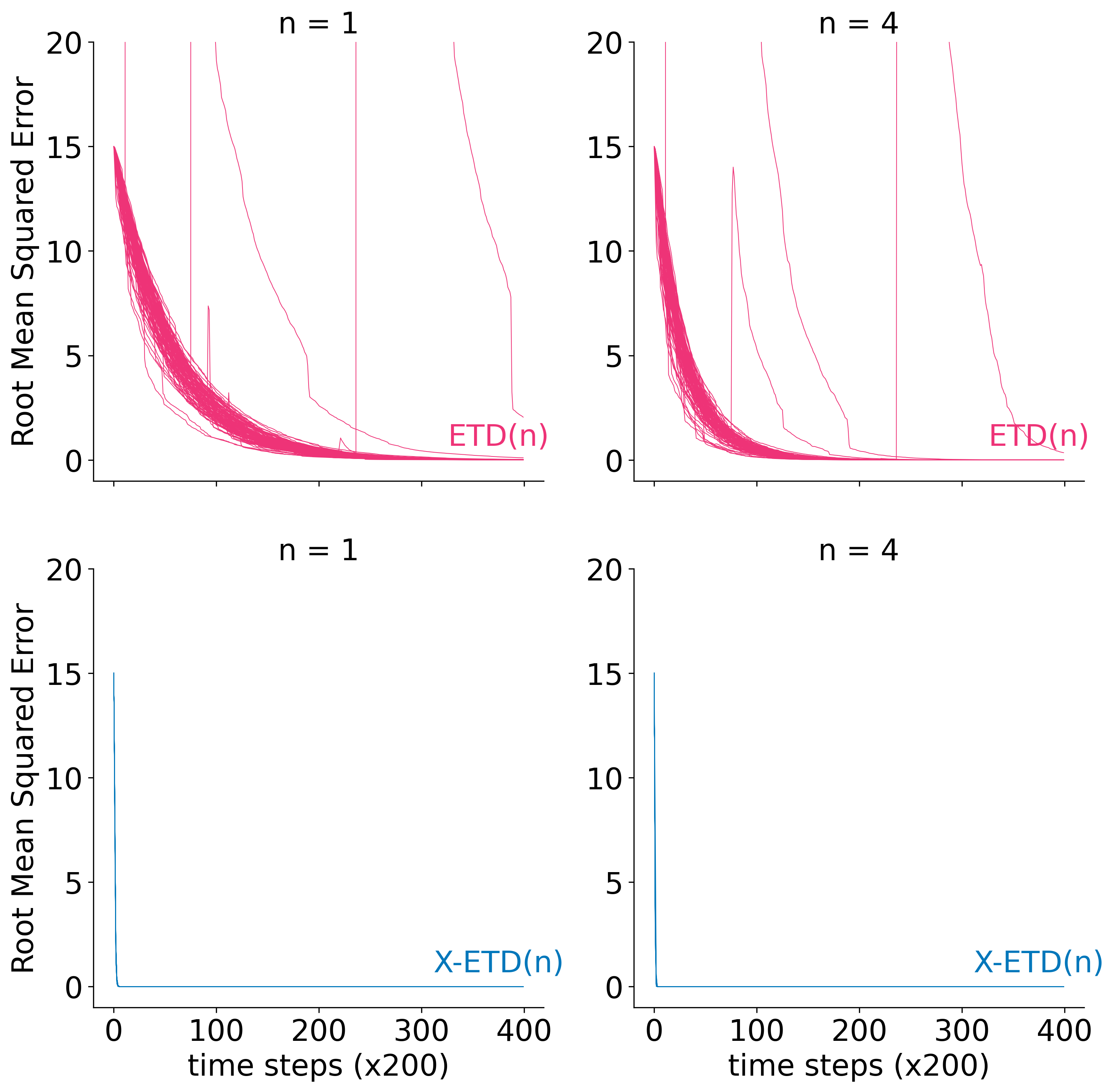}
\caption{RMSE in the value estimates over time in a modified version of Baird's counterexample. We plot each run individually to better characterize the performance of each algorithm.}
\vspace{-0.1cm}
\label{fig:app_baird_exp_ind}
\end{figure}
\end{document}